\def\confversion{0}
\newcommand{\ignore}[1]{}
	\newcommand{\conf}[1]{#1}
	\newcommand{\conf}[1]{\ignore{#1}}	
	\newcommand{\full}[1]{#1}
	\newcommand{\full}[1]{\ignore{#1}}	
\title{Improved Approximation for Fair Correlation Clustering}
\author{%
Sara Ahmadian\\
Google Research\\
\texttt{sahmadian@google.com}\\
\And
Maryam Negahbani\\
Dartmouth College\\
\texttt{maryam@cs.dartmouth.edu}\\

}
\newtheorem{theorem}{Theorem}
\theoremstyle{definition}
\newtheorem{definition}{Definition}
\newtheorem{fact}{Fact}
\newtheorem{lemma}{Lemma}
\newtheorem{corollary}{Corollary}
\newtheorem*{theorem*}{\bf Informal Theorem}
\crefname{Fact}{Fact}{Facts}
\newcommand{\fcclp}{\textsf{FCC-LP}\xspace}
\newcommand{\ep}{{E^+}}
\newcommand{\en}{{E^-}}
\newcommand{\correlation}{\textsf{Correlation Clustering}\xspace}
\newcommand{\fcorr}{\textsf{Fair \correlation}\xspace}
\newcommand{\corrcost}{\textsf{corr}}
\newcommand{\etal}{{\em et al}.~}
\newcommand{\cC}{\mathcal{C}}
\newcommand{\lp}{\textsf{LP}\xspace}
\newcommand{\ip}{\textsf{IP}\xspace}
\def\bank{{\tt bank}~}
\def\census{{\tt census}~}
\def\diabetes{{\tt diabetes}~}
\def\reuters{{\tt reuters}~}
\def\victorian{{\tt victorian}~}
\def\amazon{{\tt amazon}~}
\begin{document}

\maketitle

\begin{abstract}
Correlation clustering is a ubiquitous paradigm in unsupervised machine learning where addressing unfairness is a major challenge. Motivated by this, we study {\em fair correlation clustering} where the data points may belong to different protected groups and the goal is to ensure fair representation of all groups across clusters. Our paper significantly generalizes and improves on the quality guarantees of previous work of Ahmadi~\etal~\cite{ahmadi2020fair} and Ahmadian~\etal~\cite{ ahmadian2020fair} as follows. 
\begin{itemize}
    \item We allow the user to specify an arbitrary upper bound on the representation of each group in a cluster.
    \item Our algorithm allows individuals to have multiple protected features and ensure fairness simultaneously across them all.
    \item We prove guarantees for clustering quality and fairness in this general setting. Furthermore, this improves on the results for the special cases studied in previous work.
\end{itemize}
Our experiments on real-world data demonstrate that our clustering quality compared to the optimal solution is much better than what our theoretical result suggests. 
\end{abstract}

\section{Introduction}

Machine learning algorithms are used in many sensitive applications such as awarding home loans~\cite{khandani2010consumer,malhotra2003evaluating} and predicting recidivism~\cite{propublica,dressel2018accuracy,chouldechova2017fair}. Therefore, it is crucial to ensure these algorithms are \emph{fair} and are not biased towards or against some specific groups in the population. Defining and practicing fairness in machine learning and optimization has been a major trend in recent years~\cite{kamishima2011fairness,kamishima2012fairness,joseph2016fairness,celis2017ranking,celis2018multiwinner,chierichetti19a,yang2017measuring}. Clustering is one such learning paradigm with a line of work in fairness, starting with Chierichetti~\etal~\cite{CKLV17} and continuing with~\cite{BCFN19,ahmadian2019clustering,ahmadi2020fair,ahmadian2020fair}, to name a few.

\emph{Correlation clustering} is a popular unsupervised learning problem that has gained a lot of attention from both theory~\cite{bansal2004correlation, acn, JXLW20} and applied communities~\cite{van2007correlation, zheng2011coreference, cohen2002learning, mccallum2003toward} (see survey by \cite{Wirth2010} and references therein). In this problem, the input is a graph on a set of vertices (or nodes) corresponding to data entries along with similar($+$)/dissimilar($-$) labels on all pairs of nodes (i.e. labeled edges in a complete graph). The goal is to partition the nodes into so-called \emph{clusters} in a way that respects the given similarities the best: minimizing the \emph{clustering cost} defined as the total number of $+$ edges crossing clusters, in addition to $-$ edges inside clusters.
 


Ahmadi~\etal~\cite{ahmadi2020fair} and Ahmadian~\etal~\cite{ ahmadian2020fair} studied a variant of fair correlation clustering where each node has a color, and each color encodes a value of a \emph{protected feature}, e.g., red encodes woman for gender. 
In the case of $\ell$ colors and color 1 being the rarest one, $p_i$ is defined as the ratio of nodes of color $i$ to color $1$. Then the goal is to ensure the color distribution in clusters is the same as the entire data, while minimizing the clustering cost. \cite{ahmadi2020fair, ahmadian2020fair} design \emph{approximation algorithms} for this problem where a $\beta$-approximate clustering is one with cost at most $\beta$ times the cost of the optimal fair clustering. In particular, \cite{ahmadi2020fair} presented an $O(\ell^2 \max_{i} p_i^2)$-approximation algorithm and \cite{ahmadian2020fair} presented an $O(\ell^2)$-approximation when $p_i$'s are $1$. If the fairness constraint was instead, to ensure no cluster has a dominant color, one that takes over at least half of the cluster, \cite{ahmadian2020fair} present a $256$-approximation algorithm.

There are two main short-comings in both the previous work on \fcorr. \textbf{(1)} They do not cover the case where each node has \emph{multiple} protected attributes (e.g., gender, race, and age group), whereas, it is known~\cite{BCFN19} that ensuring fairness with respect to only one attribute and oblivious to others, can produce clusters that are extremely unfair with respect to the rest of the protected features; similar to when a standard color-oblivious clustering algorithm can output extremely unfair clusters~\cite{CKLV17,ahmadian2020fair}. \textbf{(2)} \cite{ahmadi2020fair, ahmadian2020fair} do not cover many natural and important settings of fairness constraints e.g. when no color is allowed to be more than 80\% of any cluster (see disparate impact doctrine~\cite{EEOC}, a rule in the United States).

We address these issues by designing an algorithm that allows the user to specify arbitrary upper bounds on the representation of each group in a cluster, where each individual node can have multiple protected features. We output clusters that are essentially fair across all these features simultaneously and bound the clustering cost of our algorithm with respect to the optimal solution, considerably improving the approximation ratios for the special cases studied in previous work.

\subsection{Our Result}
Our main contribution can be summarized as follows: We design an LP-rounding algorithm that given parameters $0 < \alpha_i \leq 1$ for colors $i \in \{1,\cdots,\ell\}$, and any small constant $\epsilon > 0$, returns a clustering with cost at most $O(\frac{1}{\epsilon \min_i{\alpha_i}})$ times that of the optimal fair solution. The clusters consist of singletons (that violate fairness by additive $+1$) and non-singleton clusters $C$, for which the number of points of color $i$ in $C$ is at most $(1+\epsilon) \alpha_i |C|$ for any $i$. To be more precise, we allow a $\max\{1,\epsilon|C|\max_i\alpha_i\}$ additive violation of the fairness constraints. See \Cref{corr:thm:main} for details and exact bound. 

Additive violation in group fairness is a recurring theme in {\em metric clustering}~\cite{BCFN19,ahmadian2019clustering,BerceaGKKRS019,}. 
We suspect for achieving our low approximation ratio, fairness violation is necessary due to to NP-hardness of special cases, similar to what is proved in~\cite{BCFN19}. Comparing with \cite{ahmadi2020fair, ahmadian2020fair} for the special cases addressed therein, in the case of $\alpha_i = 1/\ell$ our approximation ratio is better by a factor of $\ell$ (\Cref{corr:cor:propalpha}). For $\alpha$'s all equal to $1/2$ we get a $(4 + \frac{1}{\epsilon})$-approximation, a smaller constant compared to the 256-approximation of \cite{ahmadian2020fair} for $\varepsilon > 1/252$ (\Cref{corr:cor:alphahalf}).

Our empirical results in \Cref{corr:sec:experiments} demonstrate that our clustering cost is considerably better than the proven approximation ratio, namely, at most $15\%$ more than the optimal cost even for $\epsilon = 0.01$. Furthermore, our approximation ratio captures the tension between getting a low-cost clustering and having strict bounds on representation of points, due to small $\epsilon$ and $\alpha$'s. This relation of clustering cost with $\epsilon$ and $\alpha$'s is also apparent from our experiments in \Cref{corr:sec:experiments}.

Our algorithm is based on rounding a linear program (\lp) formulation of \fcorr, similar to the work of Ji \etal~\cite{JXLW20} on approximating other variants of correlation clustering. Our technical contribution is for cases where carving out low-cost clusters is at odds with ensuring fairness. This happens when fairness constraints in \lp are not effective due to integrality issues or when there are no clear cut fair and low-cost clusters in that region of the graph. See \Cref{corr:sec:degen} for details.

\subsection{Related Work}
\textbf{Fairness in machine learning and clustering.} 
Fairness in machine learning has received a lot of attention and is a fast growing literature (survey in \cite{caton2020fairness} and references therein). The efforts can be categorized into two main groups: (i) defining notions of fairness , and (ii) devising fair algorithms. Our work falls into the latter category and we concentrate on the notion of {\em disparate impact} which informally asks that the
decisions made (by an algorithm) should not be disproportionately different for applicants in different protected classes. Under this notion, there are works spanning from fair classification \cite{feldman2015certifying, zafar2017fairness}, to fair ranking problems \cite{celis2017ranking}, and to fair matroid optimization \cite{chierichetti19a}. Chierichetti \etal~\cite{CKLV17} introduced fair clustering problem based on this notion.

Chierichetti \etal~\cite{CKLV17} mainly defined fair clustering for two colors and later R{\"{o}}sner and Schmidt~\cite{RosnerS18} extended this definition to multiple colors. Both work required the distribution of colors in clusters to match the distribution of colors in the data and this definition was relaxed in Ahmadian~\etal~\cite{ahmadian2019clustering} by allowing arbitrary distribution of colors in different clusters as long as presence of each color in each cluster was bounded. Bera~\etal~\cite{BCFN19} further generalized this notion by allowing lower and upper bounds per groups and also allowing overlapping groups. Other closely related problems to fair clustering are clustering with diversity constraints~\cite{liyizhang}, fair center selection~\cite{chen2019proportionally}, and clustering with proportionality constraints~\cite{kleindessner2019fair}.

\textbf{Correlation clustering.} Bansal~\etal~\cite{bansal2004correlation} introduced and gave the first constant factor approximation for complete graphs with the current best being $~2.06$ by \cite{chawla2006hardness}. Variants of the problem include complete signed graph \cite{bansal2004correlation,acn}, and weighted graphs (generalizing incomplete signed graph) \cite{charikar2005clustering, demaine2006correlation}. The problem is shown to be APX-hard in the former case \cite{demaine2006correlation} and Unique-Games hard in the latter case \cite{chawla2006hardness}. The integrality gap\footnote{The maximum ratio between the solution quality of the integer program and its relaxation is called integrality gap of an \lp formulation.} of the \lp formulation of the problem for complete graphs is shown to be 2 \cite{charikar2005clustering}. Correlation clustering is also studied in the presence of various constraints. The most relevant is the upper-bounded correlation clustering, where each cluster is required to have size at most $M$ for a given input parameter $M$. \cite{JXLW20} presents a bicriteria algorithm for this version. 



\section[Definition and Preliminaries]{Problem Definition and Preliminaries}
The input of a \correlation problem is a complete undirected graph G = (V,E) with each edge $uv$ labeled either $+$ or $-$ based on whether $u$ and $v$ are similar or dissimilar, respectively. Let $\ep$ denote the set of positive edges and $\en$ denote the set of negative edges, so $E = \ep \cup \en$. For subsets $S, T \subseteq V$, $E(S,T)$ denotes the set of edges between vertices in $S$ and $T$, i.e., let $E(S,T) = E \cap (S\times T)$. We simplify this notation by defining $E(S) := E(S,S)$ and using $u$ instead of $\{u\}$ when applicable, e.g., $E(u,v) = E(\{u\}, \{v\})$. For a subset of edges $F \subseteq E$, let $F^+$ and $F^-$ denote the intersection of edges in $F$ with $\ep$ and $\en$ respectively, hence, $\ep(S,T) = \ep \cap E(S,T)$, and $\en(S,T) = \en \cap E(S,T)$. 

In a clustering problem, the goal is to find a partition $\cC$ of points such that points inside each cluster are similar and points in different clusters are dissimilar. In the \correlation problem, this notation is naturally extended so the goal is to minimize the total number of disagreements where a disagreement happens when two similar vertices are separated, i.e, a positive inter-cluster edge, or when two dissimilar vertices are clustered together, i.e., a negative intra-cluster edge. 
\begin{definition}\label{cor:def:correlation}
Given complete graph $G(V, \ep \cup \en)$, find a partition $\cC$ of $V$ that minimizes the {\em correlation cost} defined as follows
\begin{equation*}
    \corrcost(\cC) = \left|\bigcup_{C \in \cC} \en(C)\right| + \left|\bigcup_{C,C' \in \cC: C\neq C'} \ep(C,C')\right|.
\end{equation*}
\end{definition}
\noindent
For a subset $F$ of edges and a given clustering, we define 
$$
\corrcost(F) = \left|\bigcup_{C \in \cC} \en(C) \cap F\right| + \left|\bigcup_{C,C' \in \cC:\\ C\neq C'} \ep(C,C') \cap F\right|.
$$

The \fcorr is a generalization of \correlation problem where points may belong to groups corresponding to multiple protected features and there are constraints on representation of each group in each cluster. In this work, we consider the most general case with multiple features and different upper bound thresholds (suggested by \cite{BCFN19}) defined formally as follows.
\begin{definition}
In addition to the \correlation input, we are given a set of $\ell$ colors $V_1, V_2, \dots, V_\ell \subseteq V$ that may overlap. Given {\em fairness parameters} $\alpha_1, \alpha_2, \dots, \alpha_\ell \in [0,1]$, the goal is to find a clustering $\cC$ minimizing the \correlation cost while satisfying the {\em fairness constraint} that for any $C \in \cC$ and color $i \in \{0,\cdots,\ell\}$, $|V_i \cap C| \leq \alpha_i |C|$.
\end{definition}

\section[The Algorithm]{The Fair Correlation Clustering Algorithm}
In this section, we present our algorithm for solving \fcorr. The main idea is to first solve a linear program (\lp) relaxation of the problem to obtain a fractional solution and then use this fractional solution to form as many ``almost fair clusters'' as possible without sacrificing approximation factor by too much. Our aim is to get a {\em bicriterion} approximation factor, i.e., 
solution which will violate the fairness constraint mildly (say, $(1+\epsilon)$ factor) with cost at most $\beta$ times the optimal fractional solution which itself is at most the optimal cost. More precisely, for any input $\epsilon > 0$, we can show that each cluster $C$ of our algorithm is either of size $1$ or is {\em $\epsilon$-fair}, meaning $|V_i \cap C| \leq (1+\epsilon) \alpha_i |C|$ for any color class $V_i$, with approximation factor $\beta = O(\frac{1}{\epsilon \min_i{\alpha_i}})$.

\subsection{An \lp Formulation}
The standard \lp for \correlation has been studied extensively and heuristic approaches have been developed for solving it \cite{downing2010improved}. Our linear programming relaxation is just the extension of this \lp with fairness constraints and is the \lp relaxation (denoted by \fcclp) of the following integer program (\ip)
\begin{align}
   \min &\displaystyle\sum_{uv \in \ep} x_{uv} + \sum_{uv \in \en}(1-x_{uv}) & \label{corr:fcc-ip}\tag{FCC-\ip}\\
     &\displaystyle\sum_{v \in V_i} (1-x_{uv}) \leq \alpha_i\displaystyle\sum_{v \in V} (1-x_{uv})& \forall i \in [\ell], \forall u \in V, \tag{\lp-fair}\label{corr:lp:fair}\\
    & x_{uv} + x_{vw} \geq x_{uw} & \forall u,v,w \in V, \tag{$\bigtriangleup$-ineq}\label{corr:lp:trieq}\\
    & x_{uv} = x_{vu} & \forall u,v \in V, \nonumber\\
         & x_{uu} = 0, & \forall u \in V, \nonumber\\
    & x_{uv} \in \{0,1\} & \forall u,v\in V, \nonumber
\end{align}

Here, the indicator variable $x_{uv}$ denotes whether vertices $u$ and $v$ are assigned to different clusters or not; 0 and 1 values indicate same and different cluster respectively. Constraint (\ref{corr:lp:fair}) captures the fairness requirement as $\sum_{u\in V} (1 - x_{uv})$ is the size of cluster containing vertex $u$ and $\sum_{u\in V_i} (1 - x_{uv})$ is the number of vertices of color $i$ in this cluster. The rest of the constraints ensure that $x$ defines a distance metric (encoding three axioms for defining a metric). In particular, constraint (\ref{corr:lp:trieq}) also known as {\em triangle inequality} captures that if vertex $v$ and $w$ are assigned to different clusters then $u$ cannot be in the same cluster as both $v$ and $w$ at the same time. 

As mentioned, we solve the \lp relaxation of this \ip, called \fcclp, where for all $u,v \in V$ we allow $x_{uv}$ to take any (possibly fractional) value from $0$ to $1$. Observe that for any clustering, we can define a feasible $x$ that satisfies the constraints and the objective will be equal to the clustering cost, i.e., number of disagreements. Hence the optimal \lp cost lower bounds the optimal \ip cost. In fact, we get the claimed approximation by bounding the cost of our solution in terms of the optimal \lp cost.

For a subset of the edges $F \subseteq E$, we use the notation $\lp(F)$ to denote the \fcclp \emph{cost-share} of $F$. That is
\begin{equation*}
    \lp(F) := \sum_{uv \in F^+} x_{uv} + \sum_{uv\in F^-} (1-x_{uv}),
\end{equation*}
which we simplify to $\lp(uv)$ when $F = \{uv\}$.

\subsection{The Algorithm}
Our algorithm is based on rounding an optimal solution $x$ of the \fcorr \lp (\fcclp) which defines a metric on the vertices. It takes as input three parameters: $\epsilon > 0$, the allowed degree of violation in the fairness constraint, and parameters $0 < \rho \leq 1/2$ and $0 < \sigma \leq \rho/2$ which will be fixed later. The high-level idea of the algorithm is to carve out $\epsilon$-fair clusters as much as possible and then return all the remaining uncovered (unclustered) vertices as singleton clusters. We use the term {\em degenerate} to refer to such singleton clusters which will be collected in the set $\cC^1$ and we use the term {\em non-degenerate} to refer to non-singleton clusters, i.e., all $\epsilon$-fair clusters. 

For carving out a non-degenerate cluster, the algorithm relies on the distance metric defined by $x$ and only takes points that are in close proximity of each other. We basically look for a central point for which all the unclustered points at maximum distance $\rho$ from it, form an $\epsilon$-fair cluster and have average distance of at most $\sigma$. 
We refer to the latter condition as {\em density} condition and use the term {\em sparse} when this condition is not satisfied for a (prospective) cluster. This concludes the high-level idea of our algorithm presented in \Cref{corr:alg:main}.

\begin{algorithm}[tb]
   \caption{Fair-CC algorithm}
	\label{corr:alg:main}
\begin{algorithmic}[1]
   \State {\bfseries Input:} $G = (V, \ep \cup \en)$,  parameters $\epsilon, \sigma, \rho \in \mathbf{R^+}$ s.t. $2\sigma \leq \rho \leq .5$, \fcclp solution $\{x_{uv}: u,v \in V\}$
   \State {\bfseries Output:} Clustering $\cC$ including singletons in $\cC^1$
   \State $\cC \leftarrow \emptyset$ 
   \State $U \leftarrow V$ 
   \While{ $U \neq \emptyset$} 
        \State $T_u \leftarrow \{v \in U: x_{uv} \leq \rho\}$, $\forall u \in U$\label{corr:ln_tu} 
        \If{ $\exists u \in U: (\sum_{v \in T_u} x_{uv})/|T_u| \leq \sigma$  \text{ and } $(|V_i \cap T_u|) / |T_u| \leq (1+\epsilon) \alpha_i, \forall i$} \label{corr:ln:if}
            \State $\cC \leftarrow \cC \cup T_u$ 
	        \State $U \leftarrow U\backslash T_u$
	    \Else
	        \State $\cC^1 \leftarrow U$ \label{corr:alg:line-signleton}
	        \State $\cC \leftarrow \cC \cup  \bigcup _{u \in U}\{u\}$ 
	        \State $U \leftarrow \emptyset$ 
	   \EndIf
   \EndWhile
\end{algorithmic}
\end{algorithm}

\subsection{The Main Result}
To prove \cref{corr:alg:main} produces an approximately optimal solution, we bound the cost of edges by the \lp cost. Our main result is  the following: 
\begin{theorem}\label{corr:thm:main}
There is an \lp rounding algorithm that given an instance of \correlation and an $\epsilon > 0$, returns clustering $\cC$ such that for each $C \in \cC$ either $|C| = 1$ or $|V_i \cap C| \leq (1+\epsilon) \alpha_i |C|$ for all $i \in [\ell]$. This algorithm produces a $\beta$-approximation for
\begin{equation*}
    \beta := \max \Big\{\frac{1}{\epsilon\alpha^*}, 4 + \frac{1}{\epsilon}\Big\},
\end{equation*}
where $\alpha^* := \max_{i \in [\ell]} (1-\alpha_i)/\alpha_i$.
\end{theorem}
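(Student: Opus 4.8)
The plan is to reduce the claim to a charging argument. Since any integral clustering yields a feasible point of \fcclp whose objective equals its \correlation cost, the optimal \fcclp value lower-bounds \opt, so it suffices to prove $\corrcost(\cC)\le\beta\cdot\lp(E)$ for the fixed optimal fractional solution $x$. I would partition the edges that $\cC$ pays for into three groups and charge each to a disjoint share of $\lp(E)$: (i) negative edges inside a non-degenerate (validly carved) cluster, (ii) positive edges cut between two clusters, and (iii) positive edges among the degenerate singletons collected in $\cC^1$. Taking $\beta$ to be the worst of the three resulting factors then gives the bound; I leave $\rho,\sigma$ free until the end and fix them to balance these factors.

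For (i) the metric structure does the work: if $v,w\in T_u$ then $x_{vw}\le x_{uv}+x_{uw}\le 2\rho$ by \eqref{corr:lp:trieq}, so a negative intra-cluster edge has $\lp(vw)=1-x_{vw}\ge 1-2\rho$ and is charged with factor $1/(1-2\rho)$. For (ii) I split cut positive edges by length: those with $x_{vw}$ bounded below are charged directly against their own share $x_{vw}$, while a cut edge with small $x_{vw}$ can, by the triangle inequality, only meet a vertex lying near the \emph{boundary} of some ball (otherwise its far endpoint would have landed inside that ball and been clustered with it). The density condition $(\sum_{v\in T_u}x_{uv})/|T_u|\le\sigma$ together with $\sigma\le\rho/2$ bounds the number of such boundary vertices, so these few edges can be absorbed into interior \lp cost. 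A recurring helper fact is that a short edge ($x_{uv}\le\rho\le\tfrac12$) has negative share $1-x_{uv}\ge x_{uv}$, letting me treat the metric length of a short negative edge as a lower bound on its \lp cost.

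The heart of the argument is (iii). When the loop halts, every remaining ball $T_u$ ($u\in U$) fails the density test or the $\epsilon$-fairness test, and I must show $|\ep(U)|\le\beta\,\lp(U)$. For each $u$, positive neighbours in $U$ at distance $>\rho$ are cheap, since their count is at most $(1/\rho)$ times their \lp share; the difficulty is the positive neighbours \emph{inside} $T_u$, whose shares are small. If density fails at $u$, then $\sum_{v\in T_u}x_{uv}>\sigma|T_u|$, and by the helper fact the \lp cost of edges from $u$ into $T_u$ exceeds $\sigma|T_u|$, so the at most $|T_u|$ inside positive edges are charged with factor $1/\sigma$. If instead density holds but fairness fails, some color $i$ satisfies $|V_i\cap T_u|>(1+\epsilon)\alpha_i|T_u|$ while \eqref{corr:lp:fair} forces $\sum_{v\in V}(1-x_{uv})\ge(1/\alpha_i)\sum_{v\in V_i}(1-x_{uv})$; combining these shows the fractional cluster around $u$ must place a $\Theta(\epsilon)$ fraction of its mass on vertices \emph{outside} $T_u$, which (using $x_{uv}>\rho$ there and the helper fact) becomes genuine \lp cost on edges leaving $u$. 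This is the step that produces the $1/(\epsilon\alpha^*)$ factor, with $\alpha^*=\max_i(1-\alpha_i)/\alpha_i$ measuring how much outside mass the fairness constraint demands.

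I expect the fairness-failure sub-case of (iii) to be the main obstacle, for two reasons. First, it is exactly the integrality-gap regime flagged in \Cref{corr:sec:degen}: the LP satisfies fairness fractionally yet no integral ball is fair, so the ``cost of unfairness'' must be located in the LP and converted into a chargeable quantity. Second, and more delicate, is \emph{double counting}: the outside-mass edges used here, the boundary edges used in (ii), and the interior edges used in the density sub-case must be charged to \emph{disjoint} portions of $\lp(E)$, or else the factors add rather than max. Making the three charging maps disjoint, and then choosing $\rho,\sigma$ subject to $2\sigma\le\rho\le\tfrac12$ to equalize $1/(1-2\rho)$, $1/\rho$, $1/\sigma$ and the fairness factor, is what I expect to yield the stated $\beta=\max\{1/(\epsilon\alpha^*),\,4+1/\epsilon\}$.
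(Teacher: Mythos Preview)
Your overall architecture matches the paper's: split into edges touching non-degenerate clusters versus edges among singletons, and within the singleton block separate ``sparse $T_u$'' from ``dense but unfair $T_u$.'' The density-failure bound $|T_u|\le(1/\sigma)\lp(E(u,T_u))$ and the idea of extracting $\epsilon\alpha_i'|T_u|$ from the LP fairness constraint in the fairness-failure case are both correct and are exactly what the paper does.

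There is, however, a genuine gap in your fairness-failure sub-case. You write that the outside mass $\sum_{v\notin T_u}(1-x_{uv})$ can be turned into LP cost ``using $x_{uv}>\rho$ there.'' But $T_u$ at the final step is $\{v\in U:x_{uv}\le\rho\}$ with $U=\cC^1$, so a vertex $v$ already placed in a non-degenerate cluster $T_w$ is outside $T_u$ regardless of $x_{uv}$; in particular $x_{uv}\le\rho$ is perfectly possible (only $x_{uw}>\rho$ is forced). For such $v$ with $uv\in\ep$ you have $\lp(uv)=x_{uv}$ possibly tiny, and with $uv\in\en$ you have $1-x_{uv}$ possibly close to $1$ while $\lp(uv)=1-x_{uv}$ is fine but $x_{uv}$ gives you nothing. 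So the conversion of outside mass to LP cost breaks down exactly on vertices already carved into non-degenerate clusters. The paper patches this in \Cref{corr:clm:claim}: it partitions $V\setminus T_u$ into $(\cC^1\setminus T_u)$, where your $x_{uv}>\rho$ argument does work, and $V\setminus\cC^1$, where it instead reuses \Cref{corr:lem:non-long-pos,corr:lem:non-short-pos} from $u$'s side, cluster by cluster, to bound $\sum_{v\in T_w}(1-x_{uv})$ by $\lp(E(u,T_w))$ up to the factor $1/(\rho-\sigma)+1$.

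This patch has a structural consequence you did not anticipate: the charging is \emph{not} disjoint. Edges between $\cC^1$ and non-degenerate clusters are charged once by \Cref{corr:thm:non-degenerate} (factor $\le 4$) and again by \Cref{corr:thm:degenerate} via \Cref{corr:clm:claim} (factor $\tfrac{1}{2\epsilon}\cdot\tfrac{1-\rho}{\rho}$); the theorem's $4+\tfrac{1}{\epsilon}$ term is precisely this sum, not a max. So your plan to make the three charging maps disjoint is not how the paper reaches the stated $\beta$, and I don't see how to make it disjoint without losing the constant.

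A smaller point: your intra-cluster negative bound $1/(1-2\rho)$ is valid but blows up at $\rho=\tfrac12$, which is the value the paper needs to hit the constant $4$. The paper gets the sharper $1/(1-\rho-\sigma)$ by also invoking density inside non-degenerate clusters (\Cref{corr:lem:non-long-neg}); without that refinement you cannot balance to $\rho=\tfrac12,\sigma=\tfrac14$ and will end up with worse constants.
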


Note that for special cases studied by \cite{ahmadi2020fair,ahmadian2020fair}, we get the following improvements on approximation ratio modulo the fairness violation. 

\begin{corollary} \label{corr:cor:propalpha}
For the special case of $\alpha_i = \frac{p_i}{\sum_i p_i}$ for color classes $V_1,\cdots, V_l$ with $p_1 = 1$ and arbitrary choice of $p_i$ for $i \geq 2$, our algorithm gets an $\epsilon$-fair solution within $O(\epsilon^{-1}\sum_i p_i)$ times optimum which is bounded by $O(\epsilon^{-1} \ell \max_i p_i)$ factor of the optimal cost.
\end{corollary}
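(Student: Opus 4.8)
The plan is to obtain \Cref{corr:cor:propalpha} as a direct specialization of \Cref{corr:thm:main}: substitute the particular fairness parameters $\alpha_i = p_i/\sum_j p_j$ into the approximation factor $\beta$ and simplify. The $\epsilon$-fairness guarantee needs no separate argument, since \Cref{corr:thm:main} already certifies that every returned cluster $C$ is either a singleton or satisfies $|V_i \cap C| \leq (1+\epsilon)\alpha_i|C|$ for all $i$; this transfers verbatim to the present choice of $\alpha_i$. So essentially all of the work lies in evaluating $\alpha^* = \max_{i \in [\ell]} (1-\alpha_i)/\alpha_i$ and feeding it through $\beta$.

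First I would rewrite the per-color ratio. With $\alpha_i = p_i/\sum_j p_j$ one has
\[
\frac{1-\alpha_i}{\alpha_i} = \frac{\sum_{j \neq i} p_j}{p_i} = \frac{\sum_j p_j}{p_i} - 1 .
\]
Maximizing over $i$ therefore amounts to minimizing $p_i$. By hypothesis $V_1$ is the rarest color with $p_1 = 1 \leq p_i$ for every $i$, so the maximum is attained at $i=1$, giving $\alpha^* = \sum_j p_j - 1$.

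It then remains to push this through the definition of $\beta$ in \Cref{corr:thm:main} and read off the two stated bounds. In the regime of interest the rate is governed by the term carrying $\alpha^*$ together with the factor $\epsilon^{-1}$ (I would double-check the orientation in which $\alpha^*$ enters, since for tight fairness --- small $\alpha_i$, hence large $(1-\alpha_i)/\alpha_i$ --- the rounding cost must grow, so the $\sum_j p_j$ factor should land in the numerator; the $4 + 1/\epsilon$ branch is a lower-order additive $O(1/\epsilon)$ contribution that is absorbed because $\sum_j p_j \geq 1$). This yields an approximation factor $O\!\big(\epsilon^{-1}(\textstyle\sum_j p_j - 1)\big) = O(\epsilon^{-1}\sum_j p_j)$, the first claimed bound. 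The second follows from the crude estimate $\sum_j p_j \leq \ell \max_j p_j$, giving $O(\epsilon^{-1}\ell \max_j p_j)$.

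Since every step is an algebraic substitution, there is no genuine analytic obstacle; the only points demanding care are (i) confirming that the maximizer of $(1-\alpha_i)/\alpha_i$ is indeed the rarest color with $p_1 = 1$ rather than some other index, and (ii) checking the orientation of the bound in \Cref{corr:thm:main}, so that the $\sum_j p_j$ factor appears in the numerator of the approximation ratio and the constant $4 + 1/\epsilon$ term never dominates. Both are routine once the rewriting above is in hand.
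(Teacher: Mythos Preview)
Your approach is correct and is exactly the paper's (implicit) derivation: the corollary is stated without proof and is meant to follow by plugging $\alpha_i = p_i/\sum_j p_j$ into \Cref{corr:thm:main} and simplifying, precisely as you do. Your identification of the minimizer $p_1=1$ giving $\alpha^* = \sum_j p_j - 1$, and the crude bound $\sum_j p_j \le \ell\max_j p_j$, are all that is needed.

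Your caveat about the ``orientation'' of $\alpha^*$ is in fact well-placed and worth making explicit rather than leaving as a parenthetical hedge. As literally written, \Cref{corr:thm:main} has $\beta = \max\{1/(\epsilon\alpha^*),\,4+1/\epsilon\}$ with $\alpha^* = \max_i (1-\alpha_i)/\alpha_i$, which for the present $\alpha_i$ would give $\beta = 4+1/\epsilon$ and make the $\sum_j p_j$ factor disappear. Tracing the proof of \Cref{corr:lem:degen-dense} via \Cref{corr:clm:newclaim} shows that dividing $\epsilon\alpha_i'|T_u|\le \lp(\cdot)+\alpha_i'(\cdot)$ by $\epsilon\alpha_i'$ yields a coefficient $1/\alpha_i' = (1-\alpha_i)/\alpha_i$ on the $\lp$ term, which must then be upper-bounded by $\alpha^*$; so the intended factor is $\alpha^*/\epsilon$, not $1/(\epsilon\alpha^*)$. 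This also matches the informal statement $O(1/(\epsilon\min_i\alpha_i))$ in Section~1.1. With that reading, your computation $\alpha^* = \sum_j p_j - 1$ gives $\beta = O(\epsilon^{-1}\sum_j p_j)$ directly, and the corollary follows.
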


\begin{corollary} \label{corr:cor:alphahalf}
For the special case of $\alpha_i = \frac{1}{2}$ for color classes $V_1,\cdots, V_l$, our algorithm gets an $\epsilon$-fair solution within $(4 + \frac{1}{\epsilon})$-factor of the optimal cost.
\end{corollary}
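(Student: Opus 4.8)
The plan is to derive this as a direct instantiation of \Cref{corr:thm:main}, so the entire argument reduces to substituting the uniform thresholds $\alpha_i = \tfrac{1}{2}$ into the approximation factor $\beta$ and checking which branch of the maximum dominates. First I would recall what \Cref{corr:thm:main} already delivers: the algorithm returns a clustering in which every cluster $C$ is either a singleton or satisfies $|V_i \cap C| \leq (1+\epsilon)\alpha_i |C|$ for all $i \in [\ell]$, and whose correlation cost is at most $\beta := \max\{\tfrac{1}{\epsilon \alpha^*},\, 4 + \tfrac{1}{\epsilon}\}$ times the optimal (where $\alpha^* := \max_{i} (1-\alpha_i)/\alpha_i$). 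Since the \fcclp optimum lower bounds the optimal integral fair cost, this is simultaneously a $\beta$-approximation against the optimal fair clustering, which is exactly the ``optimal cost'' the corollary refers to.

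Next I would evaluate $\alpha^*$ under the hypothesis $\alpha_i = \tfrac{1}{2}$ for every color class. Each ratio becomes $(1-\alpha_i)/\alpha_i = (1/2)/(1/2) = 1$, so the maximum over $i$ is $\alpha^* = 1$. Substituting this into the first term of the maximum gives $\tfrac{1}{\epsilon \alpha^*} = \tfrac{1}{\epsilon}$. I would then compare the two terms: for every $\epsilon > 0$ we trivially have $\tfrac{1}{\epsilon} \leq 4 + \tfrac{1}{\epsilon}$, so the maximum is always attained by the second term and $\beta = 4 + \tfrac{1}{\epsilon}$, as claimed.

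Finally I would translate the fairness guarantee. Specializing the per-cluster bound from \Cref{corr:thm:main} to $\alpha_i = \tfrac{1}{2}$ turns $|V_i \cap C| \leq (1+\epsilon)\alpha_i |C|$ into $|V_i \cap C| \leq (1+\epsilon)|C|/2$ for every non-singleton cluster, which is precisely the notion of an $\epsilon$-fair solution in this setting. Combining the two observations yields the corollary. I do not expect any genuine technical obstacle here: all the work lives in \Cref{corr:thm:main}, and the only care required is confirming the value $\alpha^* = 1$ and that the constant branch $4 + \tfrac{1}{\epsilon}$ dominates $\tfrac{1}{\epsilon}$, both of which are immediate.
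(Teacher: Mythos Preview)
Your proposal is correct and matches the paper's intent: the corollary is stated as an immediate specialization of \Cref{corr:thm:main}, and your substitution $\alpha^* = 1$ yielding $\beta = \max\{1/\epsilon,\, 4 + 1/\epsilon\} = 4 + 1/\epsilon$ is exactly the computation implicit there.
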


In the next section, we prove \Cref{corr:thm:main},a direct followup of \Cref{corr:thm:non-degenerate,corr:thm:degenerate}.
\section{Analysis}\label{corr:sec:analysis}
In this section, we present the required ingredients for proving \cref{corr:thm:non-degenerate} and \cref{corr:thm:degenerate} for bounding the cost of non-degenerate and degenerate clusters in $\cC$. Roughly speaking non-degenerate clusters are more well-behaved and using the density property, i.e., bounded average distance, we can charge the correlation cost to the optimal \lp cost comfortably. The degenerate clusters require more building arguments based on various insights such as using the density of cluster around a point at the time of removal, and charging to points of the color class that violates the fairness constraint. The main idea is to charge the cost of a disagreement edge, a negative intra-cluster edge or a positive inter-cluster edge in $\cC$, to the \lp cost of a set of edges. As long as we can ensure that no edge gets charged too many times, we can bound the total cost of $\cC$ by the maximum factor an edge is charged. 

\subsection{Non-degenerate Clusters}\label{corr:sec:non-degen}
In this section, we look at non-degenerate clusters in $\cC$ and prove the following theorem which can be found in \cite{JXLW20} with minor changes in notation. But for the sake of completeness and making a gentler introduction to our contributions, \full{we bring the statements and proofs here. The purpose of this subsection is to prove the following theorem.}\conf{we bring an outline of the proofs here and include full proofs in \Cref{corr:subsec:missing}.}

\begin{theorem}\label{corr:thm:non-degenerate}
For $\cC$ output of \Cref{corr:alg:main}, the correlation cost of the set of edges incident to non-degenerate clusters, i.e., $F = \bigcup_{C \in \cC} E(C) \cup E(C,V\backslash C)$ is at most $\max\{\frac{1}{\rho-\sigma}, \frac{1}{1-\rho-\sigma}\} \lp(F)$.
\end{theorem}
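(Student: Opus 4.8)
The plan is to charge the correlation cost of the disagreement edges incident to the non-degenerate clusters to their \fcclp cost-share, cluster by cluster, and then sum. First I would make the charging well-defined by processing the clusters in the order the algorithm carves them: when a cluster $C = T_u$ is removed from the current unclustered set $U$, let $F_C := E(C) \cup E(C, U \setminus C)$ be the edges ``first touched'' by $C$, i.e. edges inside $C$ together with edges from $C$ to the still-unclustered points. Every edge of $F$ lies in exactly one $F_C$ (an edge to an earlier cluster was already claimed there), so the sets $\{F_C\}$ partition $F$; since $\corrcost$ and $\lp$ are additive over disjoint edge sets, it suffices to prove $\corrcost(F_C) \le \beta_0\,\lp(F_C)$ for each cluster, where $\beta_0 := \max\{\tfrac{1}{\rho-\sigma},\tfrac{1}{1-\rho-\sigma}\}$.

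Next I would fix $C = T_u$ and record the three properties guaranteed at the instant it is carved: $x_{uv}\le\rho$ for all $v\in C$; $x_{uw}>\rho$ for every unclustered $w\notin C$; and the density bound $\sum_{v\in C}x_{uv}\le\sigma|C|$, equivalently $\sum_{v\in C}(\rho-x_{uv})\ge(\rho-\sigma)|C|$. The triangle inequality then yields the two pointwise estimates that drive the two factors in $\beta_0$. For a negative edge $vw$ inside $C$ (a disagreement), $x_{vw}\le x_{uv}+x_{uw}$, so its cost-share satisfies $1-x_{vw}\ge 1-x_{uv}-x_{uw}$; this is the source of the $\tfrac{1}{1-\rho-\sigma}$ factor. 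For a positive edge $vw$ with $v\in C$ and $w\in U\setminus C$ (also a disagreement), $x_{vw}\ge x_{uw}-x_{uv}>\rho-x_{uv}$, so its cost-share is at least $\rho-x_{uv}$; this is the source of the $\tfrac{1}{\rho-\sigma}$ factor.

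The crux is that these estimates cannot be applied edge by edge. A vertex $v$ sitting near the ball boundary, with $x_{uv}$ close to $\rho$, can have positive cut edges whose cost-share $\rho-x_{uv}$ is arbitrarily close to $0$ and negative internal edges whose cost-share is close to $1-2\rho$; consequently neither a per-edge nor a per-vertex ratio is bounded by $\beta_0$. Instead I would aggregate over the entire cluster, summing the cost-shares of all charged edges and invoking the density bound to limit the total ``mass'' contributed by such boundary vertices. The key arithmetic is that $\sum_{v\in C}(\rho-x_{uv})\ge(\rho-\sigma)|C|$: although individual boundary vertices contribute little, the density condition forces the bulk of the vertices to lie deep inside the ball, so in aggregate the summed cost-share is a $(\rho-\sigma)$ (respectively $1-\rho-\sigma$) fraction of the number of disagreement edges, with the cheap edges at boundary vertices offset by the cost-share of the remaining edges of $F_C$ (in particular the positive intra-cluster edges, which carry cost-share $x_{uv}$ precisely where $x_{uv}$ is large). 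Carrying out this summation so that both competing edge types are simultaneously paid for from $\lp(F_C)$ is the main technical obstacle, and it is exactly where the density parameter $\sigma$ enters.

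Finally, taking the worse of the two factors gives $\corrcost(F_C)\le\beta_0\,\lp(F_C)$ for each non-degenerate cluster. Summing over all such clusters and using that $\{F_C\}$ partitions $F$ yields $\corrcost(F)\le\beta_0\,\lp(F)=\max\{\tfrac{1}{\rho-\sigma},\tfrac{1}{1-\rho-\sigma}\}\,\lp(F)$, which is the statement of \Cref{corr:thm:non-degenerate}.
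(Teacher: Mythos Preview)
Your setup is sound and matches the paper: processing clusters in the order they are carved, defining $F_C = E(C)\cup E(C,U\setminus C)$, and noting that the $F_C$'s partition $F$. You also correctly identify the two triangle-inequality estimates and correctly diagnose that neither can be applied edge by edge because of boundary vertices.

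The gap is in the aggregation you propose. You aggregate from the \emph{internal} endpoint: for a positive cut edge $vw$ with $v\in C$ you lower-bound its cost-share by $\rho-x_{uv}$, and then invoke $\sum_{v\in C}(\rho-x_{uv})\ge(\rho-\sigma)|C|$. But this density inequality controls a sum over the $|C|$ vertices of $C$, whereas what you need to control is a sum over the positive cut edges, and a single boundary vertex $v_0$ with $x_{uv_0}\approx\rho$ can be incident to arbitrarily many such edges. Concretely, your argument would need $\sum_{v\in C} p_v\,x_{uv}\le \sigma\sum_{v\in C} p_v$ (where $p_v$ is the number of positive cut edges at $v$), which does not follow from $\sum_{v\in C} x_{uv}\le\sigma|C|$. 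Your proposed offset by ``positive intra-cluster edges, which carry cost-share $x_{uv}$'' does not rescue this: there is only the single edge $uv_0$ with cost-share $x_{uv_0}\le\rho$, which cannot pay for an unbounded number of cheap cut edges at $v_0$.

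The paper resolves this by aggregating in the other direction: fix an \emph{external} vertex $w\in U\setminus T_u$ and sum the LP cost over \emph{all} of $E(w,T_u)$ (both signs). Now the sum $\sum_{v\in T_u} x_{uv}$ appears verbatim and the density bound applies directly, yielding $|E^+(w,T_u)|\le\frac{1}{\rho-\sigma}\lp(E(w,T_u))$ when $x_{uw}<1-\sigma$; a separate per-edge bound handles $x_{uw}\ge 1-\sigma$. Symmetrically, for negative edges inside $C$ the paper orders vertices by $x_{uv}$, fixes a vertex $v$ with $x_{uv}>\rho/2$, and sums over $E(v,T_u^{<}(v))$; again both signs are included so that density over (a subset of) $T_u$ applies. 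These two case splits (near/far external vertex; short/long internal vertex with the ordering) are precisely the missing pieces that make the density argument go through, and they are absent from your plan.
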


We fix a non-degenerate cluster $C$ and use $U$ to denote the set of uncovered vertices at the time $C$ is formed. Let $T_u$ be the set corresponding to $C$ which includes all vertices in $U$ with maximum distance $\rho$ from $u$. Refer to \cref{fig:fig} for an accompanying diagram. We start with positive inter-cluster edges and do case analysis based on whether their endpoint outside of $T_u$ is close to the central vertex $u$ or not (see \cref{fig:fig}). Note that for any $v \in U\backslash T_u$, we have $x_{uv} > \rho$, but this is not enough for bounding the length of crossing edge between vertices in $T_u$ to $v$. We first look at the case that the endpoint is ``far enough'' from $u$.

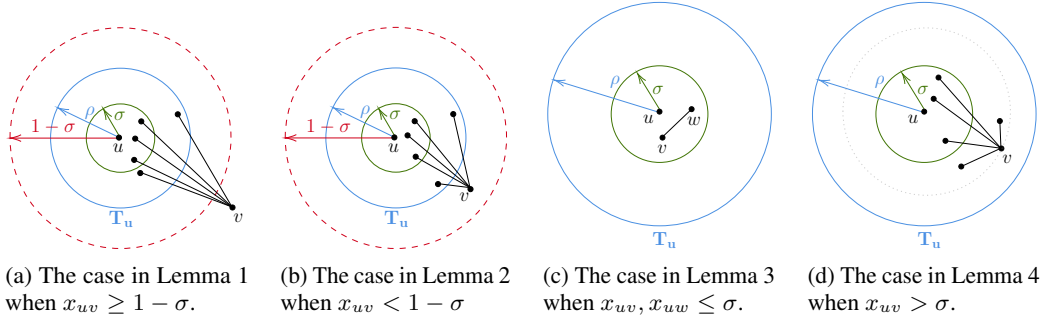
\begin{figure}
\centering
     \begin{subfigure}[b]{0.23\textwidth}
          \centering
          \resizebox{\linewidth}{!}{\tikzset{every picture/.style={line width=0.75pt}} 

\begin{tikzpicture}[x=0.75pt,y=0.75pt,yscale=-1,xscale=1]

\draw  [color={rgb, 255:red, 74; green, 144; blue, 226 }  ,draw opacity=1 ] (71,111) .. controls (71,72.89) and (101.89,42) .. (140,42) .. controls (178.11,42) and (209,72.89) .. (209,111) .. controls (209,149.11) and (178.11,180) .. (140,180) .. controls (101.89,180) and (71,149.11) .. (71,111) -- cycle ;
\draw  [color={rgb, 255:red, 65; green, 117; blue, 5 }  ,draw opacity=1 ] (106.25,111) .. controls (106.25,92.36) and (121.36,77.25) .. (140,77.25) .. controls (158.64,77.25) and (173.75,92.36) .. (173.75,111) .. controls (173.75,129.64) and (158.64,144.75) .. (140,144.75) .. controls (121.36,144.75) and (106.25,129.64) .. (106.25,111) -- cycle ;
\draw  [color={rgb, 255:red, 208; green, 2; blue, 27 }  ,draw opacity=1 ][dash pattern={on 4.5pt off 4.5pt}] (31,111) .. controls (31,50.8) and (79.8,2) .. (140,2) .. controls (200.2,2) and (249,50.8) .. (249,111) .. controls (249,171.2) and (200.2,220) .. (140,220) .. controls (79.8,220) and (31,171.2) .. (31,111) -- cycle ;
\draw [color={rgb, 255:red, 208; green, 2; blue, 27 }  ,draw opacity=1 ]   (140,111) -- (33,111) ;
\draw [shift={(31,111)}, rotate = 360] [color={rgb, 255:red, 208; green, 2; blue, 27 }  ,draw opacity=1 ][line width=0.75]    (10.93,-3.29) .. controls (6.95,-1.4) and (3.31,-0.3) .. (0,0) .. controls (3.31,0.3) and (6.95,1.4) .. (10.93,3.29)   ;
\draw [color={rgb, 255:red, 74; green, 144; blue, 226 }  ,draw opacity=1 ]   (140,111) -- (79.8,81.87) ;
\draw [shift={(78,81)}, rotate = 25.82] [color={rgb, 255:red, 74; green, 144; blue, 226 }  ,draw opacity=1 ][line width=0.75]    (10.93,-3.29) .. controls (6.95,-1.4) and (3.31,-0.3) .. (0,0) .. controls (3.31,0.3) and (6.95,1.4) .. (10.93,3.29)   ;
\draw [color={rgb, 255:red, 65; green, 117; blue, 5 }  ,draw opacity=1 ]   (140,111) -- (124.01,83.73) ;
\draw [shift={(123,82)}, rotate = 59.62] [color={rgb, 255:red, 65; green, 117; blue, 5 }  ,draw opacity=1 ][line width=0.75]    (10.93,-3.29) .. controls (6.95,-1.4) and (3.31,-0.3) .. (0,0) .. controls (3.31,0.3) and (6.95,1.4) .. (10.93,3.29)   ;
\draw  [fill={rgb, 255:red, 0; green, 0; blue, 0 }  ,fill opacity=1 ] (136.01,110.28) .. controls (136.13,108.91) and (137.34,107.89) .. (138.72,108.01) .. controls (140.09,108.13) and (141.11,109.34) .. (140.99,110.72) .. controls (140.87,112.09) and (139.66,113.11) .. (138.28,112.99) .. controls (136.91,112.87) and (135.89,111.66) .. (136.01,110.28) -- cycle ;
\draw  [fill={rgb, 255:red, 0; green, 0; blue, 0 }  ,fill opacity=1 ] (248.01,179.28) .. controls (248.13,177.91) and (249.34,176.89) .. (250.72,177.01) .. controls (252.09,177.13) and (253.11,178.34) .. (252.99,179.72) .. controls (252.87,181.09) and (251.66,182.11) .. (250.28,181.99) .. controls (248.91,181.87) and (247.89,180.66) .. (248.01,179.28) -- cycle ;
\draw  [fill={rgb, 255:red, 0; green, 0; blue, 0 }  ,fill opacity=1 ] (194.01,87.28) .. controls (194.13,85.91) and (195.34,84.89) .. (196.72,85.01) .. controls (198.09,85.13) and (199.11,86.34) .. (198.99,87.72) .. controls (198.87,89.09) and (197.66,90.11) .. (196.28,89.99) .. controls (194.91,89.87) and (193.89,88.66) .. (194.01,87.28) -- cycle ;

\draw  [fill={rgb, 255:red, 0; green, 0; blue, 0 }  ,fill opacity=1 ] (158.01,94.28) .. controls (158.13,92.91) and (159.34,91.89) .. (160.72,92.01) .. controls (162.09,92.13) and (163.11,93.34) .. (162.99,94.72) .. controls (162.87,96.09) and (161.66,97.11) .. (160.28,96.99) .. controls (158.91,96.87) and (157.89,95.66) .. (158.01,94.28) -- cycle ;
\draw  [fill={rgb, 255:red, 0; green, 0; blue, 0 }  ,fill opacity=1 ] (152.01,111.28) .. controls (152.13,109.91) and (153.34,108.89) .. (154.72,109.01) .. controls (156.09,109.13) and (157.11,110.34) .. (156.99,111.72) .. controls (156.87,113.09) and (155.66,114.11) .. (154.28,113.99) .. controls (152.91,113.87) and (151.89,112.66) .. (152.01,111.28) -- cycle ;
\draw  [fill={rgb, 255:red, 0; green, 0; blue, 0 }  ,fill opacity=1 ] (151.01,132.28) .. controls (151.13,130.91) and (152.34,129.89) .. (153.72,130.01) .. controls (155.09,130.13) and (156.11,131.34) .. (155.99,132.72) .. controls (155.87,134.09) and (154.66,135.11) .. (153.28,134.99) .. controls (151.91,134.87) and (150.89,133.66) .. (151.01,132.28) -- cycle ;
\draw  [fill={rgb, 255:red, 0; green, 0; blue, 0 }  ,fill opacity=1 ] (157.01,145.28) .. controls (157.13,143.91) and (158.34,142.89) .. (159.72,143.01) .. controls (161.09,143.13) and (162.11,144.34) .. (161.99,145.72) .. controls (161.87,147.09) and (160.66,148.11) .. (159.28,147.99) .. controls (157.91,147.87) and (156.89,146.66) .. (157.01,145.28) -- cycle ;
\draw    (196.5,87.5) -- (250,179) ;
\draw    (160.5,94.5) -- (250,179) ;
\draw    (154.5,111.5) --(250,179) ;
\draw    (153.5,132.5) -- (250,179) ;
\draw    (161.99,145.72) -- (250,179) ;

\draw (132.3,83) node [anchor=north west][inner sep=0.75pt]  [font=\large,color={rgb, 255:red, 65; green, 117; blue, 5 }] [font=\Large] {$\sigma $};
\draw (103.3,78) node [anchor=north west][inner sep=0.75pt]  [font=\large,color={rgb, 255:red, 74; green, 144; blue, 226 }]  [font=\Large] {$\rho $};
\draw (50,91) node [anchor=north west][inner sep=0.75pt]  [font=\large,color={rgb, 255:red, 208; green, 2; blue, 27 } ] [font=\Large] {$1-\sigma $};
\draw (127,181.4) node [anchor=north west][inner sep=0.75pt]  [font=\Large]  {$\textcolor[rgb]{0.29,0.56,0.89}{\mathbf{T_{u}}}$};
\draw (131.01,115) node [anchor=north west][inner sep=0.75pt]  [font=\Large]  {$u$};
\draw (250.01,184) node [anchor=north west][inner sep=0.75pt]  [font=\Large]  {$v$};
\end{tikzpicture}}  
          \caption{The case in \cref{corr:lem:non-long-pos} when $x_{uv} \geq 1-\sigma$. }
          \label{fig:A}
     \end{subfigure}
     \hspace{.27cm}
     \begin{subfigure}[b]{0.218\textwidth}
          \centering
          \resizebox{\linewidth}{!}{\tikzset{every picture/.style={line width=0.75pt}} 

\begin{tikzpicture}[x=0.75pt,y=0.75pt,yscale=-1,xscale=1]

\draw  [color={rgb, 255:red, 74; green, 144; blue, 226 }  ,draw opacity=1 ] (71,111) .. controls (71,72.89) and (101.89,42) .. (140,42) .. controls (178.11,42) and (209,72.89) .. (209,111) .. controls (209,149.11) and (178.11,180) .. (140,180) .. controls (101.89,180) and (71,149.11) .. (71,111) -- cycle ;
\draw  [color={rgb, 255:red, 65; green, 117; blue, 5 }  ,draw opacity=1 ] (106.25,111) .. controls (106.25,92.36) and (121.36,77.25) .. (140,77.25) .. controls (158.64,77.25) and (173.75,92.36) .. (173.75,111) .. controls (173.75,129.64) and (158.64,144.75) .. (140,144.75) .. controls (121.36,144.75) and (106.25,129.64) .. (106.25,111) -- cycle ;
\draw  [color={rgb, 255:red, 208; green, 2; blue, 27 }  ,draw opacity=1 ][dash pattern={on 4.5pt off 4.5pt}] (31,111) .. controls (31,50.8) and (79.8,2) .. (140,2) .. controls (200.2,2) and (249,50.8) .. (249,111) .. controls (249,171.2) and (200.2,220) .. (140,220) .. controls (79.8,220) and (31,171.2) .. (31,111) -- cycle ;
\draw [color={rgb, 255:red, 208; green, 2; blue, 27 }  ,draw opacity=1 ]   (140,111) -- (33,111) ;
\draw [shift={(31,111)}, rotate = 360] [color={rgb, 255:red, 208; green, 2; blue, 27 }  ,draw opacity=1 ][line width=0.75]    (10.93,-3.29) .. controls (6.95,-1.4) and (3.31,-0.3) .. (0,0) .. controls (3.31,0.3) and (6.95,1.4) .. (10.93,3.29)   ;
\draw [color={rgb, 255:red, 74; green, 144; blue, 226 }  ,draw opacity=1 ]   (140,111) -- (79.8,81.87) ;
\draw [shift={(78,81)}, rotate = 25.82] [color={rgb, 255:red, 74; green, 144; blue, 226 }  ,draw opacity=1 ][line width=0.75]    (10.93,-3.29) .. controls (6.95,-1.4) and (3.31,-0.3) .. (0,0) .. controls (3.31,0.3) and (6.95,1.4) .. (10.93,3.29)   ;
\draw [color={rgb, 255:red, 65; green, 117; blue, 5 }  ,draw opacity=1 ]   (140,111) -- (124.01,83.73) ;
\draw [shift={(123,82)}, rotate = 59.62] [color={rgb, 255:red, 65; green, 117; blue, 5 }  ,draw opacity=1 ][line width=0.75]    (10.93,-3.29) .. controls (6.95,-1.4) and (3.31,-0.3) .. (0,0) .. controls (3.31,0.3) and (6.95,1.4) .. (10.93,3.29)   ;
\draw  [fill={rgb, 255:red, 0; green, 0; blue, 0 }  ,fill opacity=1 ] (136.01,110.28) .. controls (136.13,108.91) and (137.34,107.89) .. (138.72,108.01) .. controls (140.09,108.13) and (141.11,109.34) .. (140.99,110.72) .. controls (140.87,112.09) and (139.66,113.11) .. (138.28,112.99) .. controls (136.91,112.87) and (135.89,111.66) .. (136.01,110.28) -- cycle ;
\draw  [fill={rgb, 255:red, 0; green, 0; blue, 0 }  ,fill opacity=1 ] (211.01,161.28) .. controls (211.13,159.91) and (212.34,158.89) .. (213.72,159.01) .. controls (215.09,159.13) and (216.11,160.34) .. (215.99,161.72) .. controls (215.87,163.09) and (214.66,164.11) .. (213.28,163.99) .. controls (211.91,163.87) and (210.89,162.66) .. (211.01,161.28) -- cycle ;
\draw  [fill={rgb, 255:red, 0; green, 0; blue, 0 }  ,fill opacity=1 ] (194.01,87.28) .. controls (194.13,85.91) and (195.34,84.89) .. (196.72,85.01) .. controls (198.09,85.13) and (199.11,86.34) .. (198.99,87.72) .. controls (198.87,89.09) and (197.66,90.11) .. (196.28,89.99) .. controls (194.91,89.87) and (193.89,88.66) .. (194.01,87.28) -- cycle ;
\draw    (196.5,87.5) -- (213,160) ;
\draw    (150,107) -- (213,160) ;
\draw    (181,156) -- (213,160) ;
\draw    (160,130) -- (213,160) ;
\draw    (159,94) -- (213,160) ;
\draw  [fill={rgb, 255:red, 0; green, 0; blue, 0 }  ,fill opacity=1 ] (149.01,108.24) .. controls (149.16,106.86) and (150.39,105.87) .. (151.76,106.01) .. controls (153.14,106.16) and (154.13,107.39) .. (153.99,108.76) .. controls (153.84,110.14) and (152.61,111.13) .. (151.24,110.99) .. controls (149.86,110.84) and (148.87,109.61) .. (149.01,108.24) -- cycle ;
\draw  [fill={rgb, 255:red, 0; green, 0; blue, 0 }  ,fill opacity=1 ] (156,94.61) .. controls (155.94,93.23) and (157.01,92.06) .. (158.39,92) .. controls (159.77,91.94) and (160.94,93.01) .. (161,94.39) .. controls (161.06,95.77) and (159.99,96.94) .. (158.61,97) .. controls (157.23,97.06) and (156.06,95.99) .. (156,94.61) -- cycle ;
\draw  [fill={rgb, 255:red, 0; green, 0; blue, 0 }  ,fill opacity=1 ] (157.01,129.24) .. controls (157.16,127.86) and (158.39,126.87) .. (159.76,127.01) .. controls (161.14,127.16) and (162.13,128.39) .. (161.99,129.76) .. controls (161.84,131.14) and (160.61,132.13) .. (159.24,131.99) .. controls (157.86,131.84) and (156.87,130.61) .. (157.01,129.24) -- cycle ;
\draw  [fill={rgb, 255:red, 0; green, 0; blue, 0 }  ,fill opacity=1 ] (179.01,156.24) .. controls (179.16,154.86) and (180.39,153.87) .. (181.76,154.01) .. controls (183.14,154.16) and (184.13,155.39) .. (183.99,156.76) .. controls (183.84,158.14) and (182.61,159.13) .. (181.24,158.99) .. controls (179.86,158.84) and (178.87,157.61) .. (179.01,156.24) -- cycle ;

\draw (132.3,83) node [anchor=north west][inner sep=0.75pt]  [font=\large,color={rgb, 255:red, 65; green, 117; blue, 5 }] [font=\Large] {$\sigma $};
\draw (103.3,78) node [anchor=north west][inner sep=0.75pt]  [font=\large,color={rgb, 255:red, 74; green, 144; blue, 226 }]  [font=\Large] {$\rho $};
\draw (50,91) node [anchor=north west][inner sep=0.75pt]  [font=\large,color={rgb, 255:red, 208; green, 2; blue, 27 } ] [font=\Large] {$1-\sigma $};
\draw (127,181.4) node [anchor=north west][inner sep=0.75pt]  [font=\Large]  {$\textcolor[rgb]{0.29,0.56,0.89}{\mathbf{T_{u}}}$};
\draw (131.01,115) node [anchor=north west][inner sep=0.75pt]  [font=\Large]  {$u$};
\draw (201.01,163) node [anchor=north west][inner sep=0.75pt]  [font=\Large]  {$v$};

\end{tikzpicture}}  
          \caption{The case in \cref{corr:lem:non-short-pos}  when $x_{uv} < 1-\sigma$}
          \label{fig:B}
     \end{subfigure}
     \hspace{.27cm}
  \begin{subfigure}[b]{0.22\textwidth}
          \centering
          \resizebox{\linewidth}{!}{\tikzset{every picture/.style={line width=0.75pt}} 

\begin{tikzpicture}[x=0.75pt,y=0.75pt,yscale=-1,xscale=1]

\draw  [color={rgb, 255:red, 74; green, 144; blue, 226 }  ,draw opacity=1 ] (277.5,130.5) .. controls (277.5,68.09) and (328.09,17.5) .. (390.5,17.5) .. controls (452.91,17.5) and (503.5,68.09) .. (503.5,130.5) .. controls (503.5,192.91) and (452.91,243.5) .. (390.5,243.5) .. controls (328.09,243.5) and (277.5,192.91) .. (277.5,130.5) -- cycle ;
\draw  [color={rgb, 255:red, 65; green, 117; blue, 5 }  ,draw opacity=1 ] (341.63,130.5) .. controls (341.63,103.51) and (363.51,81.63) .. (390.5,81.63) .. controls (417.49,81.63) and (439.38,103.51) .. (439.38,130.5) .. controls (439.38,157.49) and (417.49,179.38) .. (390.5,179.38) .. controls (363.51,179.38) and (341.63,157.49) .. (341.63,130.5) -- cycle ;
\draw [color={rgb, 255:red, 74; green, 144; blue, 226 }  ,draw opacity=1 ]   (387.95,128.06) -- (285.91,96.59) ;
\draw [shift={(284,96)}, rotate = 17.14] [color={rgb, 255:red, 74; green, 144; blue, 226 }  ,draw opacity=1 ][line width=0.75]    (10.93,-3.29) .. controls (6.95,-1.4) and (3.31,-0.3) .. (0,0) .. controls (3.31,0.3) and (6.95,1.4) .. (10.93,3.29)   ;
\draw [color={rgb, 255:red, 65; green, 117; blue, 5 }  ,draw opacity=1 ]   (390.39,125.5) -- (369.05,90.7) ;
\draw [shift={(368,89)}, rotate = 58.47] [color={rgb, 255:red, 65; green, 117; blue, 5 }  ,draw opacity=1 ][line width=0.75]    (10.93,-3.29) .. controls (6.95,-1.4) and (3.31,-0.3) .. (0,0) .. controls (3.31,0.3) and (6.95,1.4) .. (10.93,3.29)   ;
\draw  [fill={rgb, 255:red, 0; green, 0; blue, 0 }  ,fill opacity=1 ] (387.95,128.06) .. controls (387.92,126.68) and (389.01,125.53) .. (390.39,125.5) .. controls (391.77,125.47) and (392.91,126.57) .. (392.94,127.95) .. controls (392.97,129.33) and (391.88,130.47) .. (390.5,130.5) .. controls (389.12,130.53) and (387.98,129.44) .. (387.95,128.06) -- cycle ;
\draw    (392,155) -- (422.89,125.5) ;
\draw  [fill={rgb, 255:red, 0; green, 0; blue, 0 }  ,fill opacity=1 ] (420.39,125.61) .. controls (420.33,124.23) and (421.4,123.07) .. (422.78,123) .. controls (424.16,122.94) and (425.33,124.01) .. (425.39,125.39) .. controls (425.45,126.77) and (424.38,127.94) .. (423,128) .. controls (421.62,128.06) and (420.45,126.99) .. (420.39,125.61) -- cycle ;
\draw  [fill={rgb, 255:red, 0; green, 0; blue, 0 }  ,fill opacity=1 ] (391,154) .. controls (391.15,152.63) and (392.38,151.63) .. (393.75,151.78) .. controls (395.12,151.92) and (396.12,153.15) .. (395.97,154.53) .. controls (395.83,155.9) and (394.6,156.9) .. (393.22,156.75) .. controls (391.85,156.6) and (390.85,155.37) .. (391,154) -- cycle ;

\draw (382,100) node [anchor=north west][inner sep=0.75pt]  [font=\large,color={rgb, 255:red, 65; green, 117; blue, 5 }]  [font=\Large]  {$\sigma $};
\draw (339,90) node [anchor=north west][inner sep=0.75pt]  [font=\large,color={rgb, 255:red, 74; green, 144; blue, 226 }]  [font=\Large] {$\rho $};
\draw (382,248.4) node [anchor=north west][inner sep=0.75pt]  [font=\Large]  {$\mathbf{\textcolor[rgb]{0.29,0.56,0.89}{T}\textcolor[rgb]{0.29,0.56,0.89}{_{u}}}$};
\draw (372.5,130) node [anchor=north west][inner sep=0.75pt]  [font=\Large]  {$u$};
\draw (387.01,160) node [anchor=north west][inner sep=0.75pt]  [font=\Large]  {$v$};
\draw (417,132) node [anchor=north west][inner sep=0.75pt]  [font=\Large]  {$w$};

\end{tikzpicture}}  
          \caption{The case in \cref{corr:lem:non-short-neg} when \small {$x_{uv}, x_{uw} \leq \sigma$}. }
          \label{fig:C}
     \end{subfigure}
     \hspace{.27cm}
     \begin{subfigure}[b]{0.22\textwidth}
          \centering
          \resizebox{\linewidth}{!}{\tikzset{every picture/.style={line width=0.75pt}} 

\begin{tikzpicture}[x=0.75pt,y=0.75pt,yscale=-1,xscale=1]

\draw  [color={rgb, 255:red, 74; green, 144; blue, 226 }  ,draw opacity=1 ] (277.5,130.5) .. controls (277.5,68.09) and (328.09,17.5) .. (390.5,17.5) .. controls (452.91,17.5) and (503.5,68.09) .. (503.5,130.5) .. controls (503.5,192.91) and (452.91,243.5) .. (390.5,243.5) .. controls (328.09,243.5) and (277.5,192.91) .. (277.5,130.5) -- cycle ;
\draw  [color={rgb, 255:red, 65; green, 117; blue, 5 }  ,draw opacity=1 ] (341.63,130.5) .. controls (341.63,103.51) and (363.51,81.63) .. (390.5,81.63) .. controls (417.49,81.63) and (439.38,103.51) .. (439.38,130.5) .. controls (439.38,157.49) and (417.49,179.38) .. (390.5,179.38) .. controls (363.51,179.38) and (341.63,157.49) .. (341.63,130.5) -- cycle ;
\draw [color={rgb, 255:red, 74; green, 144; blue, 226 }  ,draw opacity=1 ]   (387.95,128.06) -- (285.91,96.59) ;
\draw [shift={(284,96)}, rotate = 17.14] [color={rgb, 255:red, 74; green, 144; blue, 226 }  ,draw opacity=1 ][line width=0.75]    (10.93,-3.29) .. controls (6.95,-1.4) and (3.31,-0.3) .. (0,0) .. controls (3.31,0.3) and (6.95,1.4) .. (10.93,3.29)   ;
\draw [color={rgb, 255:red, 65; green, 117; blue, 5 }  ,draw opacity=1 ]   (390.39,125.5) -- (369.05,90.7) ;
\draw [shift={(368,89)}, rotate = 58.47] [color={rgb, 255:red, 65; green, 117; blue, 5 }  ,draw opacity=1 ][line width=0.75]    (10.93,-3.29) .. controls (6.95,-1.4) and (3.31,-0.3) .. (0,0) .. controls (3.31,0.3) and (6.95,1.4) .. (10.93,3.29)   ;
\draw  [fill={rgb, 255:red, 0; green, 0; blue, 0 }  ,fill opacity=1 ] (387.95,128.06) .. controls (387.92,126.68) and (389.01,125.53) .. (390.39,125.5) .. controls (391.77,125.47) and (392.91,126.57) .. (392.94,127.95) .. controls (392.97,129.33) and (391.88,130.47) .. (390.5,130.5) .. controls (389.12,130.53) and (387.98,129.44) .. (387.95,128.06) -- cycle ;
\draw  [fill={rgb, 255:red, 0; green, 0; blue, 0 }  ,fill opacity=1 ] (402.39,93.61) .. controls (402.33,92.23) and (403.4,91.07) .. (404.78,91) .. controls (406.16,90.94) and (407.33,92.01) .. (407.39,93.39) .. controls (407.45,94.77) and (406.38,95.94) .. (405,96) .. controls (403.62,96.06) and (402.45,94.99) .. (402.39,93.61) -- cycle ;
\draw  [fill={rgb, 255:red, 0; green, 0; blue, 0 }  ,fill opacity=1 ] (466,165) .. controls (466.15,163.63) and (467.38,162.63) .. (468.75,162.78) .. controls (470.12,162.92) and (471.12,164.15) .. (470.97,165.53) .. controls (470.83,166.9) and (469.6,167.9) .. (468.22,167.75) .. controls (466.85,167.6) and (465.85,166.37) .. (466,165) -- cycle ;
\draw  [color={rgb, 255:red, 155; green, 155; blue, 155 }  ,draw opacity=0.52 ][dash pattern={on 0.84pt off 2.51pt}] (308.29,127.95) .. controls (308.29,81.19) and (346.19,43.29) .. (392.94,43.29) .. controls (439.7,43.29) and (477.6,81.19) .. (477.6,127.95) .. controls (477.6,174.7) and (439.7,212.6) .. (392.94,212.6) .. controls (346.19,212.6) and (308.29,174.7) .. (308.29,127.95) -- cycle ;
\draw  [fill={rgb, 255:red, 0; green, 0; blue, 0 }  ,fill opacity=1 ] (464.39,137.61) .. controls (464.33,136.23) and (465.4,135.07) .. (466.78,135) .. controls (468.16,134.94) and (469.33,136.01) .. (469.39,137.39) .. controls (469.45,138.77) and (468.38,139.94) .. (467,140) .. controls (465.62,140.06) and (464.45,138.99) .. (464.39,137.61) -- cycle ;
\draw  [fill={rgb, 255:red, 0; green, 0; blue, 0 }  ,fill opacity=1 ] (409.39,157.39) .. controls (409.46,156.01) and (410.63,154.94) .. (412.01,155.01) .. controls (413.39,155.07) and (414.45,156.24) .. (414.39,157.62) .. controls (414.32,159) and (413.15,160.06) .. (411.77,160) .. controls (410.39,159.94) and (409.33,158.77) .. (409.39,157.39) -- cycle ;
\draw  [fill={rgb, 255:red, 0; green, 0; blue, 0 }  ,fill opacity=1 ] (425.39,183.39) .. controls (425.46,182.01) and (426.63,180.94) .. (428.01,181.01) .. controls (429.39,181.07) and (430.45,182.24) .. (430.39,183.62) .. controls (430.32,185) and (429.15,186.06) .. (427.77,186) .. controls (426.39,185.94) and (425.33,184.77) .. (425.39,183.39) -- cycle ;
\draw    (427.89,183.5) -- (468,165) ;
\draw    (466.89,137.5) -- (468,165) ;
\draw    (404.89,93.5) -- (468,165) ;
\draw    (411.89,157.5) -- (468,165) ;
\draw    (400,115) -- (468,165) ;
\draw  [fill={rgb, 255:red, 0; green, 0; blue, 0 }  ,fill opacity=1 ] (397.5,115.11) .. controls (397.44,113.73) and (398.51,112.56) .. (399.89,112.5) .. controls (401.27,112.44) and (402.44,113.51) .. (402.5,114.89) .. controls (402.56,116.27) and (401.49,117.44) .. (400.11,117.5) .. controls (398.73,117.56) and (397.56,116.49) .. (397.5,115.11) -- cycle ;

\draw (382,100) node [anchor=north west][inner sep=0.75pt]  [font=\large,color={rgb, 255:red, 65; green, 117; blue, 5 }]  [font=\Large]  {$\sigma $};
\draw (339,90) node [anchor=north west][inner sep=0.75pt]  [font=\large,color={rgb, 255:red, 74; green, 144; blue, 226 }]  [font=\Large] {$\rho $};
\draw (382,248.4) node [anchor=north west][inner sep=0.75pt]  [font=\Large]  {$\mathbf{\textcolor[rgb]{0.29,0.56,0.89}{T}\textcolor[rgb]{0.29,0.56,0.89}{_{u}}}$};
\draw (372.5,130) node [anchor=north west][inner sep=0.75pt]  [font=\Large]  {$u$};
\draw (468,170) node [anchor=north west][inner sep=0.75pt]  [font=\Large]  {$v$};

\end{tikzpicture}}  
          \caption{The case in \cref{corr:lem:non-long-neg} when $x_{uv} > \sigma$.}
          \label{fig:D}
     \end{subfigure}
     \caption{Analysis of inter-cluster edges incident with vertices covered by non-degenerate clusters.
     First two figures, positive edges and last two figures, negative edges.}
     \label{fig:fig}
 \end{figure}

\begin{restatable}{lemma}{nonlongpos}\label{corr:lem:non-long-pos}
For any vertex $v \in U\backslash T_u$ with $x_{uv} \geq (1-\sigma)$, $|E^+(v, T_u)| \leq \frac{1}{1-\rho - \sigma}\lp(E^+(v, T_u))$.
\end{restatable}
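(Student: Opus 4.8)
The plan is to reduce the counting inequality to a uniform per-edge lower bound on the \lp variables. Since $v \notin T_u$, every edge of $E^+(v, T_u)$ is a positive inter-cluster edge, hence a genuine disagreement, and its \lp cost-share is simply $\lp(E^+(v, T_u)) = \sum_{w \in T_u:\, vw \in \ep} x_{vw}$. Therefore the claimed bound $|E^+(v, T_u)| \le \tfrac{1}{1-\rho-\sigma}\,\lp(E^+(v, T_u))$ is equivalent to $(1-\rho-\sigma)\,|E^+(v, T_u)| \le \sum_{w \in T_u:\, vw \in \ep} x_{vw}$, which follows at once if I can show $x_{vw} \ge 1-\rho-\sigma$ for every $w \in T_u$.

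First I would establish this per-edge bound via the triangle inequality (\ref{corr:lp:trieq}) using the central vertex $u$ as pivot. For any $w \in T_u$ the definition of $T_u$ gives $x_{uw} \le \rho$, and by hypothesis $x_{uv} \ge 1-\sigma$; combining these with $x_{uv} \le x_{uw} + x_{wv}$ yields $x_{wv} \ge x_{uv} - x_{uw} \ge (1-\sigma) - \rho = 1-\rho-\sigma$. This holds for all $w \in T_u$, in particular for those with $vw \in \ep$.

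Then I would sum this lower bound over the positive crossing edges: $\lp(E^+(v, T_u)) = \sum_{w \in T_u:\, vw \in \ep} x_{vw} \ge (1-\rho-\sigma)\,|E^+(v, T_u)|$. Since the algorithm's constraints $\rho \le 1/2$ and $\sigma \le \rho/2$ force $1-\rho-\sigma > 0$, dividing through gives the stated inequality.

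I do not expect a genuine obstacle here: the argument is a single application of the triangle inequality followed by a counting step, and the geometric picture (\cref{fig:A}) makes it transparent, as $v$ sits at distance at least $1-\sigma$ from $u$ while all of $T_u$ lies within the ball of radius $\rho$, so every crossing edge $vw$ is ``long.'' The only point that warrants care is tracking the constant precisely: one must use the full gap $(1-\sigma)-\rho$ to obtain the tight denominator $1-\rho-\sigma$, since this exact quantity feeds into the approximation factor of \cref{corr:thm:non-degenerate}.
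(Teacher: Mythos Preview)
Your proposal is correct and matches the paper's proof essentially line for line: both fix an edge $vw\in E^+(v,T_u)$, apply the triangle inequality $x_{vw}\ge x_{uv}-x_{uw}\ge (1-\sigma)-\rho$, and conclude by summing/charging each edge to its own \lp cost. Your version is just slightly more explicit about the summation and the positivity of $1-\rho-\sigma$.
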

\full{\begin{proof}
Fix an edge $vw \in E^+(v, T_u)$ ($w$ may or may not be equal to $u$). Roughly speaking, since $v$ is far from $u$ and $w$ is close to $u$ ($w$ belongs to $T_u$), the length of edge $vw$ cannot be too short. Formally, by \ref{corr:lp:trieq}, 
\begin{align}
x_{vw} \geq x_{uv} - x_{uw} \geq 1 - \sigma - \rho &\tag*{since  $x_{uw} \leq \rho$,} 
\end{align}
and so the cost of this edge can be just charged to the \lp cost the edge. 
\end{proof}}

The remaining set of positive inter-cluster edges correspond to vertices that are not ``far enough'' from $u$. Here the length of a crossing edge may be short and so we rely on the fact that the cluster $T_u$ is dense, i.e., $\sum_{w\in T_u} x_{uw} \leq \sigma|T_u|$, and so on average the length of crossing edges are long. In the following lemma, the number of these ``short'' positive edges from a $v$ to $T_u$ are bounded by the LP cost of \emph{all} edges from $v$ to $T_u$, including the negative ones.
\begin{restatable}{lemma}{nonshortpos}\label{corr:lem:non-short-pos}
For any vertex $v \in U\backslash T_u$ with $x_{uv}  < (1-\sigma)$, $|E^+(v, T_u)| \leq \frac{1}{\rho - \sigma}\lp(E(v, T_u))$.
\end{restatable}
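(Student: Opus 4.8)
The plan is to establish the equivalent inequality $\lp(E(v,T_u)) \ge (\rho-\sigma)\,|E^+(v,T_u)|$, which rearranges directly to the claim. The guiding idea is to charge every positive edge $vw$ with $w\in T_u$ a budget of $\rho-\sigma$, and to pay for it out of the LP cost-share of the \emph{whole} edge set $E(v,T_u)$, crucially including the negative edges. The two facts that make $v$ useful here are that $v\notin T_u$ forces $x_{uv}>\rho$, while the hypothesis of this lemma supplies $x_{uv}<1-\sigma$; these are exactly what will convert per-edge triangle-inequality bounds into the desired aggregate.

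Concretely, I would first lower-bound $\lp(vw)$ for each $w\in T_u$ by a single application of \ref{corr:lp:trieq}, splitting on the sign of $vw$. If $vw\in\ep$ then $\lp(vw)=x_{vw}\ge x_{uv}-x_{uw}$, and if $vw\in\en$ then $\lp(vw)=1-x_{vw}\ge 1-x_{uv}-x_{uw}$, the latter bound holding per edge regardless of whether its right-hand side is positive. Summing over all $w\in T_u$, and writing $|E^+(v,T_u)|$ and $|E^-(v,T_u)|$ for the counts of positive and negative edges, the $-x_{uw}$ terms collect into $-\sum_{w\in T_u}x_{uw}$, which the density condition bounds below by $-\sigma|T_u|$. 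Using $|T_u|=|E^+(v,T_u)|+|E^-(v,T_u)|$ one obtains
\[
\lp(E(v,T_u)) \ge |E^+(v,T_u)|\,(x_{uv}-\sigma) + |E^-(v,T_u)|\,(1-x_{uv}-\sigma).
\]
Finally $x_{uv}>\rho$ gives $x_{uv}-\sigma>\rho-\sigma$ for the positive-edge term, while $x_{uv}<1-\sigma$ makes the coefficient $1-x_{uv}-\sigma$ of the negative-edge term strictly positive, so that term may be dropped; this yields $\lp(E(v,T_u))\ge(\rho-\sigma)|E^+(v,T_u)|$, and $\rho-\sigma>0$ follows from $2\sigma\le\rho$.

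The step I expect to be the main obstacle—the one that is easy to get wrong on a first attempt—is recognizing that the negative edges must \emph{not} be discarded at the outset. The natural move of bounding $\lp(E(v,T_u))\ge\sum_{vw\in\ep}x_{vw}$ and then invoking density fails, because density controls $\sum_{w\in T_u}x_{uw}$ only by $\sigma|T_u|$, whereas the charge is spread over the possibly much smaller count $|E^+(v,T_u)|$, leaving an uncontrollable residual $\sigma|T_u|$ term. Retaining the negative edges is precisely what supplies the extra slack $|E^-(v,T_u)|\,(1-x_{uv}-\sigma)\ge 0$, and this is exactly where the hypothesis $x_{uv}<1-\sigma$—the defining feature of this ``not far enough'' case, as opposed to \ref{corr:lem:non-long-pos}—is consumed.
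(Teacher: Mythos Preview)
Your proof is correct and follows essentially the same approach as the paper: lower-bound each $\lp(vw)$ via the triangle inequality through $u$, sum over all $w\in T_u$ to collect $-\sum_{w\in T_u}x_{uw}\ge -\sigma|T_u|$ from density, and then use $x_{uv}>\rho$ and $x_{uv}<1-\sigma$ to extract the $(\rho-\sigma)|E^+(v,T_u)|$ bound. The only cosmetic difference is that the paper substitutes the numeric bounds on $x_{uv}$ before regrouping (obtaining $\rho p+\sigma n-\sigma(p+n)$) whereas you regroup first into $p(x_{uv}-\sigma)+n(1-x_{uv}-\sigma)$ and then substitute; the arguments are otherwise identical.
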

\full{\begin{proof}
Let $P := \ep(v,T_u)$ with $p = |P|$ and let $N := \en(v,T_u)$ with $n = |N|$. Analyzing the \lp cost of all crossing edges, we have
\begin{align*}
    \lp(E(v,T_u))&= \sum_{vw \in P} x_{vw} + \sum_{vw \in N} 1-x_{vw}\\
    &\geq  \sum_{vw \in P} x_{uv} - x_{uw} + \sum_{vw \in N} 1-x_{uv}-x_{uw}  \\
    &= px_{uv} + n(1-x_{uv}) - \sum_{w \in T_u} x_{uw} \\
    &> \rho p + \sigma n - \sigma(p + n)= (\rho - \sigma)p,
\end{align*}
where the first inequality follows from \ref{corr:lp:trieq}, and the final inequality uses $\rho \leq x_{uv} \leq 1 - \sigma$ and density of $T_u$ along with $|T_u| = |E(v,T_u)| = p + n$. As $ p = |\ep(v,T_u)|$ this concludes the proof.
\end{proof}}
For negative edges inside the cluster $T_u$, we do a similar case analysis based on whether the endpoints are close or far from $u$ (see accompanying diagram in \cref{fig:fig}). Let us start with the case where endpoints are close and hence the length of the negative edge is short by \ref{corr:lp:trieq}.
\begin{restatable}{lemma}{nonshortneg}\label{corr:lem:non-short-neg}
Let $N_s := \{vw \in \en(T_u): x_{uv},x_{uw} \leq \frac{\rho}{2}\}$, then $|N_s| \leq \frac{1}{1-\rho} \lp(N_s)$.
\end{restatable}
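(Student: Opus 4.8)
The plan is to charge each negative edge in $N_s$ directly to its own \lp cost-share, exploiting that both endpoints being close to $u$ forces the edge itself to be short, which in turn forces a large cost-share on a \emph{negative} edge. Concretely, I would fix an arbitrary edge $vw \in N_s$ and apply the triangle inequality \ref{corr:lp:trieq} in the form $x_{vw} \leq x_{uv} + x_{uw}$. Since $vw \in N_s$ means $x_{uv} \leq \rho/2$ and $x_{uw} \leq \rho/2$, this yields $x_{vw} \leq \rho/2 + \rho/2 = \rho$.

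Next I would translate this length bound into a cost-share bound. Because $vw$ is a negative edge, its \fcclp cost-share is $\lp(vw) = 1 - x_{vw}$, and the inequality $x_{vw} \leq \rho$ gives $\lp(vw) = 1 - x_{vw} \geq 1 - \rho$. This is the crux: a short negative edge is exactly one the \lp is reluctant to cut, so its cost-share is bounded away from zero by a quantity we control through $\rho$.

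Finally I would sum over all edges of $N_s$. Summing the pointwise bound gives
\begin{equation*}
    \lp(N_s) = \sum_{vw \in N_s} (1 - x_{vw}) \geq (1-\rho)\,|N_s|,
\end{equation*}
and rearranging yields $|N_s| \leq \frac{1}{1-\rho}\lp(N_s)$, as desired. Note that here each edge of $N_s$ is charged only to itself, so there is no overcharging to worry about (unlike the sparse/long cases where one must amortize several crossing edges against the density budget).

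I do not expect a genuine obstacle in this case — it is the most benign of the four, precisely because closeness of both endpoints to $u$ makes the disagreement edge short and hence individually expensive for the \lp. The only point requiring any care is the constant: one must verify that the worst case is $x_{vw} = \rho$ (attained when both endpoints sit at distance exactly $\rho/2$ and the triangle inequality is tight), which pins the factor at $\frac{1}{1-\rho}$ rather than anything smaller. The complementary case $x_{uv} > \rho/2$ or $x_{uw} > \rho/2$ (the ``long'' negative edges of \cref{fig:D}) is handled separately in \cref{corr:lem:non-long-neg}.
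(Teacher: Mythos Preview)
Your proposal is correct and matches the paper's own proof essentially line for line: fix $vw\in N_s$, use the triangle inequality to get $x_{vw}\le x_{uv}+x_{uw}\le \rho$, conclude $\lp(vw)=1-x_{vw}\ge 1-\rho$, and sum. The paper states this in one sentence without the explicit summation, but the argument is identical.
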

\full{\begin{proof}
Fix the edge $vw \in N_s$. Using (\ref{corr:lp:trieq}), $x_{vw} \leq x_{uv} + x_{uw} \leq \rho$ thus $1-x_{vw} \geq 1-\rho$ and so the cost of the edge can be charged to the \lp cost of the edge. 
\end{proof}}
To complete the cost analysis of the edges incident to $T_u$, it remains to bound the cost of negative intra-cluster edges where at least one endpoint is far from $u$. In order to not overcharge any edge, we need to fix an ordering on the vertices of $T_u$. For $v,w \in T_u$, define $v < w$ if $x_{uv} \leq x_{uw}$ (break ties consistently) and $T_u^<(v) = \{w \in T_u : w < v\}$. Again the proof relies on the density of $T_u$ and a counting argument.
\begin{restatable}{lemma}{nonlongneg}\label{corr:lem:non-long-neg}
For any $v \in T_u$ with $\rho/2 < x_{uv}$, $|\en(v, T_u^<(v))| \leq \frac{1}{1-\rho-\sigma} \lp(E(v, T_u^<(v)))$.
\end{restatable}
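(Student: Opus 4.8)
The plan is to mirror the counting argument of \Cref{corr:lem:non-short-pos}, but now summing only over the ``earlier'' vertices $T_u^<(v)$ and spending the density budget of $T_u$ carefully. Fix $v \in T_u$ with $d := x_{uv} \in (\rho/2, \rho]$, write $S := T_u^<(v)$, and split the edges from $v$ to $S$ into $P := \ep(v,S)$ and $N := \en(v,S)$ with $p := |P|$, $n := |N|$, so $|S| = p+n$. First I would lower bound the cost-share $\lp(E(v,S))$ edge by edge using the metric constraints: by the reverse triangle inequality $x_{vw} \ge x_{uv}-x_{uw} = d - x_{uw}$ for positive edges, and by \ref{corr:lp:trieq} the bound $1 - x_{vw} \ge 1 - x_{uv} - x_{uw} = 1 - d - x_{uw}$ for negative edges. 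Summing these and collapsing $\sum_{vw\in P}x_{uw}+\sum_{vw\in N}x_{uw}=\sum_{w\in S}x_{uw}$ gives
\begin{equation*}
\lp(E(v,S)) \;\ge\; pd + n(1-d) - \sum_{w \in S} x_{uw},
\end{equation*}
exactly as in \Cref{corr:lem:non-short-pos}.

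The crux -- and the step I expect to be the main obstacle -- is controlling the stray term $\sum_{w\in S}x_{uw}$. Unlike \Cref{corr:lem:non-short-pos}, here I sum over the proper subset $S \subsetneq T_u$, so the density bound $\sum_{w\in T_u}x_{uw}\le\sigma|T_u|$ cannot be applied to $S$ without loss. The idea is to exploit the ordering: every vertex $w \in T_u\setminus S$ satisfies $x_{uw}\ge x_{uv}=d$, so these vertices already consume at least $d\,|T_u\setminus S|$ of the density budget. Writing $k := |T_u|-|S| = |T_u|-p-n \ge 0$, this yields
\begin{equation*}
\sum_{w\in S}x_{uw} \;=\; \sum_{w\in T_u}x_{uw} - \sum_{w\in T_u\setminus S}x_{uw} \;\le\; \sigma|T_u| - d\,k.
\end{equation*}

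Plugging this into the first display and substituting $|T_u| = p+n+k$, I would reduce the target inequality $\lp(E(v,S)) \ge (1-\rho-\sigma)n$, after routine cancellation, to
\begin{equation*}
(p+k)(d-\sigma) + n(\rho - d) \;\ge\; 0.
\end{equation*}
This holds term by term: $\rho - d \ge 0$ since $v \in T_u$ forces $d=x_{uv}\le\rho$, while $d - \sigma > 0$ precisely because the hypothesis $x_{uv} > \rho/2$ together with $\sigma \le \rho/2$ gives $d > \sigma$. This is exactly where the case assumption $\rho/2 < x_{uv}$ enters, and it dovetails with \Cref{corr:lem:non-short-neg}, which covers the complementary regime $x_{uv},x_{uw}\le\rho/2$. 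Since $1-\rho-\sigma>0$ (as $\rho\le 1/2$ and $\sigma\le\rho/2$), dividing through yields the claimed $|N|=n\le\frac{1}{1-\rho-\sigma}\lp(E(v,S))$.
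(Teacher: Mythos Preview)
Your proposal is correct and follows essentially the same approach as the paper: lower bound $\lp(E(v,T_u^<(v)))$ via \ref{corr:lp:trieq}, then control $\sum_{w\in T_u^<(v)} x_{uw}$ by combining the density of $T_u$ with the fact that every $w\ge v$ has $x_{uw}\ge x_{uv}$. The only cosmetic difference is that the paper substitutes the bounds $\rho/2 < x_{uv} \le \rho$ immediately to get $\frac{\rho}{2}p + (1-\rho)n - \sigma(p+n)$, whereas you keep $d=x_{uv}$ symbolic until the final nonnegativity check $(p+k)(d-\sigma)+n(\rho-d)\ge 0$; both routes arrive at the same inequality.
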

\full{\begin{proof} 
Let $p$ and $n$ be the number of vertices before $v$ in the ordering with positive and negative edge to $v$, respectively, i.e., $p = |\ep(v,T_u^<(v))|$ and let $n = \en(v,T_u^<(v))$.
\begin{align}
    \lp(E(v, T_u^<(v))) &= \sum_{\substack{vw \in \ep(v, T_u^<(v))}} x_{vw} + \sum_{\substack{vw \in \en(v, T_u^<(v))}} (1-x_{vw})& \notag\\
    &\geq \sum_{\substack{vw \in \ep(v, T_u^<(v))}} (x_{uv} - x_{uw})+ \sum_{\substack{vw \in \en(v, T_u^<(v))}} (1-x_{uv} - x_{uw})&  \text{(using (\ref{corr:lp:trieq}))}\notag\\
    & = p x_{uv} + n (1 - x_{uv}) - \sum_{w \in T_u^<(v)} x_{uw} &\notag\\
    & > \frac{\rho}{2}p + (1-\rho)n - \sum_{w \in T_u^<(v)} x_{uw}, &\label{corr:eq:non-long-neg:sub}
\end{align}
Where the last inequality is by the given assumption $ \frac{\rho}{2} < x_{uv} \leq \rho$. To bound the last term, we use the fact that $T_u$ is dense (i.e. $\sum_{w \in T_u} x_{uw} \leq \sigma |T_u|$), as follows.
\begin{align*}
    \sum_{w \in T_u} x_{uw} &\leq \sigma |T_u| \Rightarrow&\\
    \sum_{w \in T_u^<(v)} x_{uw} &\leq \sigma |T_u| -\sum_{\substack{w \in T_u: \\ w \geq v}} x_{uw}\\
    & = \sigma (p + n + |\{w \in T_u: w \geq v\}| ) -\sum_{\substack{w \in T_u: \\ w \geq v}} x_{uw}\\
    & < \sigma (p + n + |\{w \in T_u: w \geq v\}| ) - \frac{\rho}{2}|\{w \in T_u: w \geq v\}|&\text{(definition of ordering)}\\&&\text{ and $x_{uv} > \rho/2$)}\\
     &\leq \sigma (p + n). &\text{(since $\sigma \leq \rho/2$)}
\end{align*}

Next, we substitute this into \Cref{corr:eq:non-long-neg:sub}.

\begin{align*}
    \lp(E(v, T_u^<(v))) &> \frac{\rho}{2}p + (1-\rho)n - \sum_{w \in T_u^<(v)} x_{uw}\\
    & > \frac{\rho}{2}p + (1-\rho)n - \sigma(p + n) \geq (1-\rho-\sigma)n,
\end{align*}
since $\sigma \leq \rho/2$.
\end{proof}}
\full{This concludes the analysis of edges incident with non-degenerate clusters and enables the following proof.
\begin{proof} [Proof of \cref{corr:thm:non-degenerate}.]
First, for any internal edge $vw \in E(C)$ for some $C:= T_u \in \cC$ ($u$ may be the same as $v$ or $w$), it gets charged either through \cref{corr:lem:non-short-neg} or \cref{corr:lem:non-long-neg}. Based on the length of $x_{uv}$ and $x_{vw}$ and the defined ordering on vertices in $T_u$, $\lp(vw)$ is charged at most by one of these lemmas and so it is charged at most by $\max\{\frac{1}{1-\rho}, \frac{1}{1-\rho-\sigma}\} = \frac{1}{1-\rho-\sigma}$ as $\sigma > 0$.

For any crossing edge $vw \in E(C,V\backslash C)$  with $v \notin C$ and $C := T_u \in \cC$ ($u$ may be the same as $v$), it can gets charged either through \cref{corr:lem:non-long-pos} or \cref{corr:lem:non-short-pos}. Based on the length of $x_{uv}$, $\lp(vw)$ is charged at most by one of these lemmas and so it is charged at most $\max\{\frac{1}{1-\rho-\sigma}, \frac{1}{\rho-\sigma}\}$.
\end{proof} }

\subsection{Degenerate Clusters}\label{corr:sec:degen}
In this section, we focus on proving the following theorem for the cost of degenerate clusters. Here, the only disagreement edges are positive edges to other clusters and since we already counted the inter-cluster edges to non-degenerate clusters (\Cref{corr:lem:non-long-pos} and \Cref{corr:lem:non-long-neg}), we just need to focus on bounding the cost of positive edges between degenerate clusters with respect to the cost that \lp pays.

\begin{theorem}\label{corr:thm:degenerate}
For $\cC$ output of \Cref{corr:alg:main}, the correlation cost of the set of edges between degenerate clusters, i.e., $F = \bigcup_{u,u' \in \cC^1} \{uu'\}$ is at most 
\begin{eqnarray*}
\big[\max\{\frac{1}{\rho} + \frac{1-\rho}{\epsilon\rho}, \frac{1}{\sigma}, \frac{1}{2\sigma} + \frac{1}{2\epsilon\alpha^*}, \frac{1}{\epsilon\alpha^*}\}\big]\lp(F) + 
\frac{1}{2\epsilon}\cdot\frac{1-\rho}{\rho} \lp(E(\cC^1, V\backslash\cC^1)), 
\end{eqnarray*}
where $\alpha^* := \max_{i \in [\ell]} (1-\alpha_i)/\alpha_i$.
\end{theorem}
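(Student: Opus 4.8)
The plan is to reduce the claim to a charging argument over positive edges only, and then route those charges onto LP cost using the two ways in which the \textbf{if}-test of \Cref{corr:alg:main} can fail at the moment the algorithm enters the \textbf{else} branch. Since every cluster in $\cC^1$ is a singleton, no negative edge lies inside a degenerate cluster, so $\corrcost(F) = |\ep(\cC^1)|$ and it suffices to charge each positive edge $uu'$ with $u,u'\in\cC^1$ (cost $1$) to LP cost. I would first dispose of the easy \emph{long} edges: if $x_{uu'}>\rho$ then $1 < \frac{1}{\rho}x_{uu'} = \frac1\rho\lp(uu')$, which accounts for the $\frac1\rho$ in the first term of the bound and charges each such edge only to its own share in $\lp(F)$. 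The remaining \emph{short} positive edges ($x_{uu'}\le\rho$, equivalently $u'\in T_u$ and $u\in T_{u'}$ with $T_u=\{v\in\cC^1:x_{uv}\le\rho\}$) are the substance of the proof.

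For the short edges I would classify every $u\in\cC^1$ by the reason the test failed (both invoked with $U=\cC^1$): call $u$ \emph{sparse} if $\sum_{v\in T_u}x_{uv}>\sigma|T_u|$, and otherwise (density holds but) \emph{unfair} if some color $i$ has $|V_i\cap T_u|>(1+\epsilon)\alpha_i|T_u|$; every vertex falls into at least one class. To avoid charging an edge twice I would fix an arbitrary total order on $\cC^1$ and, as in \Cref{corr:lem:non-long-neg}, assign each short positive edge $uu'$ to its larger endpoint, so the charge landing on $u$ involves only $\ep(u,T_u^<(u))$. For a sparse $u$ the density failure $\sum_{v\in T_u}x_{uv}>\sigma|T_u|$ is precisely what pays: splitting $\sum_{v\in T_u}x_{uv}$ into its positive and negative parts, using $x_{uv}\le\rho$ on the negative part while recovering LP slack from negatives via $1-x_{uv}\ge 1-\rho$, and invoking $2\sigma\le\rho$, I would bound the number of short positive edges assigned to $u$ by $\frac{1}{\sigma}$ (or $\frac{1}{2\sigma}$ once the order-induced halving is accounted for) times $\lp$ of the edges incident to $u$ inside $T_u$. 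This yields the $\frac1\sigma$ and $\frac{1}{2\sigma}$ factors.

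The hard part will be the \emph{unfair} vertices, where carving a low-cost ball is genuinely at odds with fairness. Here density holds, so the over-represented color-$i$ points sit at distance $\le\rho$ from $u$ and contribute fractional mass $\ge(1-\rho)|V_i\cap T_u|$ to the color-$i$ side of the feasible constraint \ref{corr:lp:fair} at $u$; since that constraint caps the color-$i$ fractional mass at an $\alpha_i$ fraction of the entire fractional cluster $\sum_{v\in V}(1-x_{uv})$, the integral over-representation by the $(1+\epsilon)$ factor forces a quantifiable amount of \emph{non}-color-$i$ fractional mass to lie at distance $>\rho$ from $u$. I would convert that surplus into LP cost by observing it is realized either by positive edges from $u$ to other degenerate vertices (already counted, or long and hence chargeable) or by edges from $u$ crossing to non-degenerate clusters; rearranging the ratio $\alpha_i/(1-\alpha_i)$ produces the $\alpha^*$ dependence and the factor $\frac{1}{\epsilon\alpha^*}$, while the mass escaping to $V\setminus\cC^1$ is paid for by the additive term $\frac{1}{2\epsilon}\cdot\frac{1-\rho}{\rho}\lp(E(\cC^1,V\setminus\cC^1))$ (the $\frac{1-\rho}{\rho}$ converting a distance-$>\rho$ cross edge into usable slack, the $\frac{1}{2\epsilon}$ arising from the violation margin together with the order-induced halving).

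The main technical difficulty, and where I expect to spend the most care, is making this local-to-global fairness bridge quantitatively tight without double counting: the fractional mass certified by \ref{corr:lp:fair} must be apportioned consistently across the many unfair centers $u$ that may share it, and the routing of surplus between $\lp(F)$ and the cross term must respect the ordering so that no single LP edge is charged beyond what the constants $\frac{1}{2\sigma}+\frac{1}{2\epsilon\alpha^*}$ and $\frac{1}{\epsilon\alpha^*}$ permit. Taking the maximum over the four per-edge charge rates, together with the separately bounded cross term, then gives the stated bound.
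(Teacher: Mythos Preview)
Your outline matches the paper's approach: separate long from short positive edges, classify each $u\in\cC^1$ as sparse or dense-but-unfair, bound $|T_u|$ in each case via \Cref{corr:lem:degen-sparse} and \Cref{corr:lem:degen-dense}, and halve because every short edge sits in both $T_u$ and $T_{u'}$. Two corrections are needed.

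First, the total order on $\cC^1$ is redundant. The paper does not order $\cC^1$; it simply notes that for a short edge $uu'$ one has $u'\in T_u$ and $u\in T_{u'}$, so $\sum_{u\in\cC^1}\tfrac12|T_u|$ already counts every short edge exactly once. Your ordering yields the same halving, but phrasing it as ``assign to the larger endpoint and then also halve'' is double-dipping; keep only the $\tfrac12$ and bound $|T_u|$, not $|T_u^{<}(u)|$ (the sparsity and unfairness inequalities concern all of $T_u$).

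Second, and this is the real gap, you have the two escape routes in the unfair case swapped, and one of them handled incorrectly. The conversion ``$1-x_{uv}\le\frac{1-\rho}{\rho}x_{uv}$ for a distance-$>\rho$ edge'' is valid precisely when $x_{uv}>\rho$; that is the case $v\in\cC^1\setminus T_u$, i.e.\ long edges to \emph{other degenerates}. This is what produces the $\frac{1-\rho}{\epsilon\rho}$ contribution to the first term of the max (a long edge $uu'$ is charged $\frac1\rho$ by \Cref{corr:fact:long-pos} \emph{and} up to $\frac{1-\rho}{\epsilon\rho}$ by the $F_2$-term of \Cref{corr:lem:degen-dense} when both endpoints are dense). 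For $v\in V\setminus\cC^1$, however, $v$ was carved into some non-degenerate cluster earlier and may well satisfy $x_{uv}\le\rho$, so your $\frac{1-\rho}{\rho}$ trick does not apply. The paper handles this case (Claim~\ref{corr:clm:claim}) by reusing \Cref{corr:lem:non-long-pos} and \Cref{corr:lem:non-short-pos} from $u$'s vantage point looking into each non-degenerate $T_w$: after separating negative edges, those lemmas bound $|\ep(u,T_w)|$ by $\lp(E(u,T_w))$ times $\frac{1}{1-\rho-\sigma}$ or $\frac{1}{\rho-\sigma}$ depending on $x_{uw}$. That reuse of the non-degenerate machinery is the missing ingredient in your plan.
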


Recall in \Cref{corr:alg:main}, degenerate cluster $\{u\}$ is added to $\cC$ when at least one of the conditions in the if statement in Line~\ref{corr:ln:if} is not satisfied. At this time, all degenerate clusters, i.e., $\bigcup_{u\in\cC^1} \{u\}$ are added to $\cC$. Note that, unlike non-degenerate clusters, here $T_u$ and $T_{u'}$ for degenerate clusters $\{u\},\{u'\} \in \cC$ may overlap. 

Since for a positive edge $uv$ where $v\notin T_u$, we can apply \cref{corr:fact:long-pos}, our focus throughout this section is on edges inside $T_u$'s, starting with sparse $T_u$'s (\Cref{corr:lem:degen-sparse}\conf{, with proof in \Cref{corr:subsec:missing}}) and then moving to dense $T_u$'s. We conclude by using these lemmas to prove \cref{corr:thm:non-degenerate}. 
\full{The following facts are ubiquitously used in the proofs.}
\begin{fact}\label{corr:fact:long-pos}
The correlation clustering cost of any $(u, v) \in \ep$, for any $\rho > 0$, is at most $\lp(uv)/\rho$ if $x_{uv} > \rho$. 
\end{fact}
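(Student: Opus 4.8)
The plan is to observe that the correlation-clustering cost charged to any single positive edge is at most $1$, and then to compare this against its LP cost-share divided by $\rho$. Concretely, I would unwind the per-edge cost: for a positive edge $uv \in \ep$, evaluating $\corrcost(\cdot)$ on the singleton set $\{uv\}$ makes the negative-intra-cluster term vanish (since $uv \notin \en$), leaving only the positive-inter-cluster term. That term equals $1$ exactly when $u$ and $v$ are placed in different clusters and $0$ otherwise, so the correlation cost of $uv$ is at most $1$ in any clustering.

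Next I would invoke the definition of $\lp(F)$: for $F = \{uv\}$ with $uv \in \ep$ we have $F^+ = \{uv\}$ and $F^- = \emptyset$, hence $\lp(uv) = x_{uv}$. Combining this with the hypotheses $x_{uv} > \rho$ and $\rho > 0$ yields $\lp(uv)/\rho = x_{uv}/\rho > 1$. Since the correlation cost of $uv$ never exceeds $1$, it is in particular at most $x_{uv}/\rho = \lp(uv)/\rho$, which is exactly the claimed bound.

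There is essentially no obstacle here: the entire statement reduces to the elementary observation that a single edge contributes at most one unit of disagreement, together with the assumed strict lower bound $x_{uv} > \rho$ on its LP distance variable. The only point requiring a moment of care is recognizing that the bound holds vacuously (with the left-hand side equal to $0$) when $u$ and $v$ happen to lie in the same cluster, so that the inequality is nontrivial only in the disagreement case, where it follows from $x_{uv}/\rho > 1$.
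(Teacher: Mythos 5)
Your proof is correct and is precisely the argument the paper intends: the paper states this as a Fact with no written proof because it reduces to exactly your observation that a positive edge contributes at most $1$ to the correlation cost, while $\lp(uv) = x_{uv} > \rho$ gives $\lp(uv)/\rho > 1$. Your handling of the same-cluster case (cost $0$, bound holds trivially) is the right level of care, and nothing further is needed.
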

\full{\begin{fact}\label{corr:fact:sumxuv}
For any $F\subseteq E(u,T_u)$, $\sum_{uv \in F} x_{uv} \leq \lp(F)$.
\end{fact}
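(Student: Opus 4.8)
The plan is to verify this inequality by comparing the two sides edge-by-edge, after splitting $F$ into its positive and negative parts $F^+$ and $F^-$. Writing out the definition of $\lp(F)$, the positive edges contribute identically to both sides: for $uv \in F^+$ the cost-share is exactly $x_{uv}$, so these terms cancel when we take the difference $\lp(F) - \sum_{uv\in F} x_{uv}$. All that survives is the contribution of the negative edges, and the whole claim reduces to showing that this leftover is nonnegative.

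Concretely, I would compute
\begin{align*}
\lp(F) - \sum_{uv \in F} x_{uv}
&= \Big(\sum_{uv \in F^+} x_{uv} + \sum_{uv \in F^-} (1 - x_{uv})\Big) - \sum_{uv \in F^+} x_{uv} - \sum_{uv \in F^-} x_{uv}\\
&= \sum_{uv \in F^-} (1 - 2 x_{uv}).
\end{align*}
The key structural input is that every edge in $F$ is incident to $u$ and has its other endpoint in $T_u$, so by the definition of $T_u$ in Line~\ref{corr:ln_tu} of \Cref{corr:alg:main} we have $x_{uv} \leq \rho$. Combined with the standing assumption $\rho \leq 1/2$ on the algorithm's parameters, this gives $x_{uv} \leq 1/2$, hence $1 - 2x_{uv} \geq 0$ for each $uv \in F^-$. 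The sum above is therefore a sum of nonnegative terms, so it is nonnegative, which is exactly $\sum_{uv \in F} x_{uv} \leq \lp(F)$.

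There is essentially no obstacle here: the statement is an elementary per-edge inequality, and the only point requiring care is to invoke the correct reason that $x_{uv} \leq 1/2$ for edges of $E(u,T_u)$ — namely that membership in $T_u$ forces $x_{uv} \leq \rho$ and the parameter constraint forces $\rho \leq 1/2$. I would state the proof in one or two lines using the display above, emphasizing that the positive edges contribute the same $x_{uv}$ term to both sides and that only the negative edges need the $x_{uv} \leq 1/2$ bound.
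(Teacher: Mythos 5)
Your proposal is correct and takes essentially the same approach as the paper: both arguments reduce to a per-edge comparison in which positive edges contribute exactly $x_{uv}$ to both sides, and negative edges satisfy $x_{uv} \leq 1 - x_{uv} = \lp(uv)$ because membership in $T_u$ forces $x_{uv} \leq \rho \leq 1/2$. Your difference computation $\sum_{uv \in F^-}(1 - 2x_{uv}) \geq 0$ is just a repackaging of the paper's edge-by-edge inequality, with the same key justification.
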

\begin{proof}
Take any $uv \in F$. If $uv \in \ep$ then $x_{uv} \leq \lp(uv)$. Now if $uv \in \en$, since $x_{uv} \leq \rho \leq 0.5$, $x_{uv} \leq 1-x_{uv} = \lp(uv)$.
\end{proof}}
\begin{restatable}{lemma}{degensparse}\label{corr:lem:degen-sparse}
For a degenerate cluster $\{u\} \in \cC$ with a sparse $T_u$, i.e., $(\sum_{v \in T_u} x_{uv})/|T_u| >\sigma$, 
\begin{equation}\label{corr:eq:degen-sparse}
    |T_u| \leq \frac{1}{\sigma}\lp(E(v,T_u)).
\end{equation}
\end{restatable}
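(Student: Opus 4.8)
The plan is to read the bound off directly from the sparsity hypothesis combined with the per-edge comparison supplied by \Cref{corr:fact:sumxuv}. The quantity on the right-hand side, which I interpret as $\lp(E(u,T_u))$, is the LP cost-share of the star of edges joining the center $u$ to the vertices of $T_u$, and the sparsity condition is precisely a lower bound on the total fractional distance $\sum_{v\in T_u} x_{uv}$ living on that star. So the whole lemma is a two-step chain relating these two quantities.

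First I would rewrite the sparsity hypothesis $(\sum_{v\in T_u} x_{uv})/|T_u| > \sigma$ in the form $\sum_{v\in T_u} x_{uv} > \sigma |T_u|$. Next I would invoke \Cref{corr:fact:sumxuv} with $F = E(u,T_u)$: every $v\in T_u$ satisfies $x_{uv}\le\rho\le 1/2$, so each edge $uv$ has LP cost-share at least $x_{uv}$ (it equals $x_{uv}$ when $uv$ is positive, and equals $1-x_{uv}\ge x_{uv}$ when $uv$ is negative). Summing over the star yields $\sum_{v\in T_u} x_{uv} \le \lp(E(u,T_u))$. Chaining the two inequalities gives $\sigma |T_u| < \lp(E(u,T_u))$, and dividing through by $\sigma$ produces the claimed bound $|T_u|\le\frac{1}{\sigma}\lp(E(u,T_u))$.

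There is essentially no obstacle here: this is the mildest of the degenerate-cluster lemmas, and the only point needing care is the per-edge comparison $x_{uv}\le\lp(uv)$, which hinges on $x_{uv}\le\rho\le 1/2$\,---\,a property available precisely because $T_u$ collects only vertices within LP-distance $\rho$ of the center $u$. The substantive difficulty for degenerate clusters arises in the complementary \emph{dense} case, where no sparsity slack is available and one must instead charge disagreements against the color class that violates the fairness constraint; the present lemma simply disposes of the easy sparse regime so that the dense analysis can proceed.
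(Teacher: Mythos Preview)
Your proposal is correct and follows exactly the paper's approach: rearrange the sparsity inequality to $\sigma|T_u| < \sum_{v\in T_u} x_{uv}$ and then apply \Cref{corr:fact:sumxuv} to bound the sum by $\lp(E(u,T_u))$. Your reading of the right-hand side as $\lp(E(u,T_u))$ (the paper's statement has a typo with $v$ in place of $u$) is also the intended one.
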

\full{\begin{proof}
This just follows from rearranging density definition to $|T_u| \leq \sigma \sum_{uv \in E(u,T_u)} x_uv$ and applying \Cref{corr:fact:sumxuv}.
\end{proof}}

It remains to bound the cost of edges in $E(u,T_u)$ for a degenerate cluster ${u}$ with dense $T_u$. The following lemma is our main technical contribution.

\begin{lemma}\label{corr:lem:degen-dense}
For a degenerate cluster $\{u\} \in \cC$ with a dense $T_u$, i.e., $(\sum_{v \in T_u} x_{uv})/|T_u| \leq \sigma$,
{\small
\begin{equation}\label{corr:eq:degen-dense}
   |T_u|  \leq \frac{1}{\epsilon}[\frac{1}{\alpha^*} \lp(F_1)+ \frac{1-\rho}{\rho}\lp(F_2) +(1 + \frac{1}{\rho - \sigma}) \lp(F_3)],
\end{equation}
}
where $F_1 = E(u, T_u)$, $F_2 = E(u, \cC^1\backslash T_u)$, $F_3 = E(u, V\backslash \cC^1)$, and $\alpha^* := \max_{i \in [\ell]} (1-\alpha_i)/\alpha_i$.

\end{lemma}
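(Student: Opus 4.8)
The plan is to exploit the single reason why $\{u\}$ was frozen into a singleton. Because $T_u$ is dense, the density test in Line~\ref{corr:ln:if} of \Cref{corr:alg:main} \emph{held} for $u$, so it must be the fairness test that failed: there is a color $i$ with $|V_i\cap T_u| > (1+\epsilon)\alpha_i|T_u|$. Writing $a=|V_i\cap T_u|$ and $m=|T_u|$, this rearranges to $\epsilon m < \frac{1}{\alpha_i}(a-\alpha_i m)$, so it suffices to bound the \emph{integer excess} $a-\alpha_i m$ of color $i$ around $u$ by an LP cost; dividing by $\epsilon\alpha_i$ at the end will recover the claimed right-hand side. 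Note that $F_1,F_2,F_3$ partition $E(u,V)$, since $T_u\subseteq\cC^1$ at the moment of freezing.

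The bridge from the integer excess to LP cost is the fairness constraint (\ref{corr:lp:fair}) of \fcclp at $u$, which controls the \emph{fractional} excess: with $S_i=\sum_{v\in V_i}(1-x_{uv})$ and $S=\sum_{v\in V}(1-x_{uv})$ it reads $S_i-\alpha_i S\le 0$. I would write $a-\alpha_i m = (S_i-\alpha_i S) + (a-S_i) - \alpha_i(m-S)$ and discard the first, non-positive term. Expanding $a-S_i=\sum_{v\in V_i\cap T_u}x_{uv}-\sum_{v\in V_i\setminus T_u}(1-x_{uv})$ and $m-S=\sum_{v\in T_u}x_{uv}-\sum_{v\in V\setminus T_u}(1-x_{uv})$, then regrouping by whether $v\in T_u$, the terms with the ``wrong'' sign (the $\alpha_i$-weighted non-$i$ points inside $T_u$ and the $(1-\alpha_i)$-weighted color-$i$ points outside $T_u$) are non-positive and may be dropped, leaving
\[
a-\alpha_i m \;\le\; (1-\alpha_i)\sum_{v\in V_i\cap T_u} x_{uv} \;+\; \alpha_i\sum_{v\in (V\setminus V_i)\setminus T_u}(1-x_{uv}).
\]
Dividing by $\epsilon\alpha_i$ turns the first coefficient into $(1-\alpha_i)/\alpha_i$, which is at most $\alpha^*$ by definition, and clears the $\alpha_i$ from the second sum.

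It then remains to charge these two sums to $\lp(F_1),\lp(F_2),\lp(F_3)$. For the first, every $v\in T_u$ has $x_{uv}\le\rho\le 1/2$, so \Cref{corr:fact:sumxuv} gives $\sum_{v\in V_i\cap T_u}x_{uv}\le\lp(F_1)$, producing the $\alpha^*\lp(F_1)$ term. For the second I split $V\setminus T_u$ into the far singletons $\cC^1\setminus T_u$ and the clustered vertices $V\setminus\cC^1$. On $\cC^1\setminus T_u$ we have $x_{uv}>\rho$, whence $1-x_{uv}\le\frac{1-\rho}{\rho}\lp(uv)$ for positive edges (since $t\mapsto(1-t)/t$ is decreasing) and for negative edges (since $\frac{1-\rho}{\rho}\ge 1$ as $\rho\le 1/2$), yielding $\frac{1-\rho}{\rho}\lp(F_2)$. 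On $V\setminus\cC^1$, negative edges contribute $\lp(uv)$ directly (the ``$1$'' in the coefficient), and for positive edges I bound $1-x_{uv}\le 1$ and merely \emph{count}: every clustered $v$ lies in some non-degenerate $T_{u'}$ formed while $u$ was still uncovered, so $u$ is precisely the external vertex in \Cref{corr:lem:non-long-pos}/\Cref{corr:lem:non-short-pos}; summing their guarantees over all such clusters (both cases are at most $\frac{1}{\rho-\sigma}\lp(\cdot)$ because $\rho\le 1/2$) bounds the number of positive $F_3$-edges by $\frac{1}{\rho-\sigma}\lp(F_3)$, giving the $(1+\frac{1}{\rho-\sigma})\lp(F_3)$ term.

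The main obstacle is exactly this last $F_3$ bound: unlike the other two it cannot be a per-edge charge, since a positive edge from $u$ to a \emph{nearby} clustered vertex may have $x_{uv}$ arbitrarily small while $1-x_{uv}\approx 1$, so $\lp(uv)$ carries no information. The key move is to reduce $\sum(1-x_{uv})$ over positive $F_3$-edges to a count and re-use the non-degenerate crossing-edge lemmas with $u$ playing the role of the outside vertex; one must check carefully that each clustered $v$ is covered by a cluster created \emph{before} $u$ was frozen into $\cC^1$, so that those lemmas apply with the correct center $u'$. A secondary care point is the sign bookkeeping when discarding the favorable terms in the regrouping step, so that no LP cost is charged with the wrong coefficient or double-counted across $F_1,F_2,F_3$.
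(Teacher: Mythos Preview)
Your proposal is correct and follows essentially the same route as the paper. The paper packages the first half of your argument as \Cref{corr:clm:newclaim}: starting from the LP fairness constraint for the violated color $i$, restricting to $T_u$ on one side and splitting off $T_u$ on the other, then invoking the integer violation $|V_i\cap T_u|>(1+\epsilon)\alpha_i|T_u|$ to extract $\epsilon\alpha_i'|T_u|$ on the left (with $\alpha_i'=\alpha_i/(1-\alpha_i)$). Your identity $a-\alpha_i m=(S_i-\alpha_i S)+(a-S_i)-\alpha_i(m-S)$ followed by dropping the three non-positive pieces is exactly the same computation, just organized additively rather than as a chain of inequalities. The second half of your argument is the paper's \Cref{corr:clm:claim}: split $V\setminus T_u$ into $\cC^1\setminus T_u$ (where $x_{uv}>\rho$ gives the $\frac{1-\rho}{\rho}$ factor) and $V\setminus\cC^1$ (where negative edges give the ``$+1$'' and positive edges are counted via \Cref{corr:lem:non-long-pos,corr:lem:non-short-pos} with $u$ as the outside vertex).

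One remark: your derivation (and the paper's, once you divide \Cref{corr:clm:newclaim} through by $\epsilon\alpha_i'$) produces the coefficient $(1-\alpha_i)/\alpha_i\le\alpha^*$ on $\lp(F_1)$, not $1/\alpha^*$ as printed in the lemma; this is consistent with the $O(1/(\epsilon\min_i\alpha_i))$ stated in the introduction and with \Cref{corr:cor:propalpha}, so the discrepancy is in the lemma's typesetting rather than in your argument.
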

\begin{proof}
Let  $i \in [\ell]$ be the color that violates the fairness constraint, i.e., $|V_i \cap T_u| > (1+\epsilon) \alpha_i |T_u|$. Note, this $i$ exists, since $u$ is a degenerate cluster (the if condition in \Cref{corr:alg:main} is false for $u$) but $T_u$ is dense. Using these facts in addition to the LP fairness constraints gives \Cref{corr:clm:newclaim} \conf{(proof in \Cref{corr:subsec:missing})}. Here, $\alpha_i' = \frac{\alpha_i}{1-\alpha_i}$. \conf{The last step is to plug in \Cref{corr:clm:claim}, which has a rather involved proof deferred to \Cref{corr:subsec:missing}.}
\begin{restatable}{claim}{newclaiminlemma}\label{corr:clm:newclaim}
$\epsilon \alpha_i'|T_u| \leq \lp(V_i\cap T_u)+ \alpha_i'\sum_{v \in (V\backslash T_u)\backslash V_i} (1-x_{uv})$.
\end{restatable}
\full{\begin{proof}
Since $x$ is a feasible solution to \fcclp: 
\begin{align*}
    \sum_{v \in V_i} (1-x_{uv}) &\leq \alpha_i\sum_{v \in V} (1-x_{uv}) \Rightarrow
    \sum_{v \in V_i} (1-x_{uv}) & \leq \frac{\alpha_i}{1-\alpha_i}\sum_{v \in V\backslash V_i} (1-x_{uv}).
\end{align*}
For ease of notation, let us refer to $\frac{\alpha_i}{1-\alpha_i}$ as $\alpha_i'$ so we have
\begin{align*}
\sum_{v \in V_i} (1-x_{uv}) & \leq \alpha_i'\sum_{v \in V\backslash V_i} (1-x_{uv})\Rightarrow\\
\sum_{v \in V_i\cap T_u} (1-x_{uv}) & \leq \alpha_i'\sum_{v \in V\backslash V_i} (1-x_{uv})\Rightarrow\\
|V_i\cap T_u|-\sum_{v \in V_i\cap T_u}x_{uv}  &\leq \alpha_i'\sum_{v \in V\backslash V_i} (1-x_{uv})\\
 & = \alpha_i'\sum_{v \in T_u\backslash V_i} (1-x_{uv}) + \alpha_i'\sum_{v \in (V\backslash T_u)\backslash V_i} (1-x_{uv})\\
& \leq \alpha_i' |T_u\backslash V_i|+ \alpha_i'\sum_{v \in (V\backslash T_u)\backslash V_i} (1-x_{uv})\Rightarrow\\
|V_i\cap T_u|-\alpha_i' |T_u\backslash V_i|  &\leq \sum_{v \in V_i\cap T_u}x_{uv} + \alpha_i'\sum_{v \in (V\backslash T_u)\backslash V_i} (1-x_{uv})\\
  &\leq_{\text{By \cref{corr:fact:sumxuv}}} \lp(V_i\cap T_u)+ \alpha_i'\sum_{v \in (V\backslash T_u)\backslash V_i} (1-x_{uv}).
\end{align*}
Using the fact that fairness constraint is violated for $V_i$, meaning, $|V_i \cap T_u| > (1+\epsilon) \alpha_i |T_u|$ and consequently $|T_u\backslash V_i| < (1 - (1+\epsilon)\alpha_i) |T_u|$ we have
\begin{equation*}
    (1+\epsilon) \alpha_i |T_u| - \alpha_i'(1 - (1+\epsilon) \alpha_i) |T_u| \leq \lp(V_i\cap T_u)+ \alpha_i'\sum_{v \in (V\backslash T_u)\backslash V_i} (1-x_{uv}). 
\end{equation*}
Recall, $\alpha_i' = \alpha_i/(1-\alpha_i)$ so the coefficient of $|T_u|$ on the LHS can be simplified to $\epsilon \alpha_i'$. 
So we get
\begin{equation*}\label{corr:eq:bef_claim}
     \epsilon \alpha_i'|T_u| \leq \lp(V_i\cap T_u)+ \alpha_i'\sum_{v \in (V\backslash T_u)\backslash V_i} (1-x_{uv}).  
\end{equation*}
\end{proof}}
\full{The last step is to plug in \Cref{corr:clm:claim} into \Cref{corr:clm:newclaim} to get the lemma statement.}
\begin{restatable}{claim}{claiminlemma}\label{corr:clm:claim} $\sum_{v \in (V\backslash T_u)\backslash V_i} (1-x_{uv})$ is at most $\frac{(1-\rho)}{\rho}\lp((\cC^1\backslash T_u)\backslash V_i) + \big(\frac{1}{\rho - \sigma}+1\big) \lp(V\backslash \cC^1)$.
\end{restatable}
\full{\begin{proof}
We bound the sum by breaking it into two cases based on whether $v$ is in $\cC^1$ or not.

\noindent
\textbf{Case $v\in  (\cC^1\backslash T_u)\backslash V_i$.} since $v \notin T_u$, $x_{uv} > \rho$ and so $1-x_{uv} < (1-\rho) \leq \frac{(1-\rho)}{\rho} x_{uv}$, therefore,
\begin{align}
\sum_{v \in (\cC^1\backslash T_u)\backslash V_i} (1-x_{uv}) =
     &\sum_{\substack{v \in (\cC^1\backslash T_u)\backslash V_i:\\ v \in \ep}} (1-x_{uv}) + \sum_{\substack{v \in (\cC^1\backslash T_u)\backslash V_i:\\ v \in \en}} (1-x_{uv}) \notag\\
    &\leq  \frac{(1-\rho)}{\rho}\sum_{\substack{v \in (\cC^1\backslash T_u)\backslash V_i:\\ v \in \ep}} x_{uv} + \sum_{\substack{v \in (\cC^1\backslash T_u)\backslash V_i:\\ v \in \en}} (1-x_{uv}) \notag\\
    &\leq \frac{(1-\rho)}{\rho}\lp((\cC^1\backslash T_u)\backslash V_i)\label{corr:eq:rhs21}.
\end{align}
    \textbf {Case $v\in (V\backslash \cC^1)\backslash V_i$.} Note that, $V\backslash \cC^1$ is already partitioned into non-degenerate clusters in $\cC$. Just for the sake of this proof, for any $v\in (V\backslash \cC^1)$, let $C(v)$ denote the center of $v$'s cluster. That is, $v \in T_{C(v)} \in \cC$. Now using this new notation, let us divide the set $(V\backslash \cC^1)\backslash V_i$ into three parts. The first part, is simply members with negative edges to $u$, that is, $N = \{v \in (V\backslash \cC^1)\backslash V_i: (u,v) \in \en\}$. For $v \in (V\backslash \cC^1)$ with $(u,v) \in \ep$, it falls into either the second or third part, depending on $x_{uC(v)}$ where $C(v)$ is the center of $v$'s cluster. That is, the second part is defined as $P_l = \{v \in (V\backslash \cC^1)\backslash V_i: (u,v) \in \ep \text{ and } x_{uC(v)} \geq (1-\sigma)\}$ and the third part is $P_s = \{v \in (V\backslash \cC^1)\backslash V_i: (u,v) \in \ep \text{ and } x_{uC(v)} < (1-\sigma)\}$.
\begin{align*}
\sum_{v \in (V\backslash \cC^1)\backslash V_i} (1-x_{uv})&= \sum_{v \in N} (1-x_{uv}) + \sum_{v \in P_l}(1-x_{uv}) + \sum_{v \in P_s} (1-x_{uv})\notag\\
&\leq \lp(N) + |P_l| + |P_s|.
\end{align*}
    Take any non-degenerate cluster centered at a $w$, that is, $T_w \in \cC$. $T_w$ could intersect at most one of $P_l$ or $P_s$ based on $x_{uw}$, and depending on which, we use \Cref{corr:lem:non-long-pos,corr:lem:non-short-pos} from $w$'s perspective to bound $|P_l|$ and $|P_s|$ respectively.
\begin{align*}
\sum_{v \in (V\backslash \cC^1)\backslash V_i} (1-x_{uv})&\leq_{\text{(\Cref{corr:lem:non-long-pos})}} \lp(N) + \frac{1}{1-\rho-\sigma}\lp(P_l) + |P_s| &\notag\\
&\leq_{\text{(\Cref{corr:lem:non-short-pos})}} \lp(N) + \frac{1}{1-\rho-\sigma}\lp(P_l) + \frac{1}{\rho-\sigma} \sum_{\substack{C \in \cC:\\ C\cap P_s \neq \emptyset}} \lp(C)\notag\\
    &\leq \big(\frac{1}{\rho - \sigma}+1\big) \lp(V\backslash \cC^1).
\end{align*}

\end{proof}}
\full{This concludes the proof of the lemma.}
\end{proof}
\full{We now have the required ingredients for proving \cref{corr:thm:degenerate}.}

\begin{proof}[Proof of \cref{corr:thm:degenerate}]
This proof is obtained by combining three sets of inequalities: (i) inequalities for long positive edges by \Cref{corr:fact:long-pos}, (ii) inequalities in \cref{corr:lem:degen-sparse} for degenerate $u$ with sparse $T_u$, and (iii)  inequalities in \cref{corr:lem:degen-dense} for for degenerate $u$ with dense $T_u$. We use the size of $T_u$ as an upper-bound on $E(u,T_u)$. 

Take any two degenerates $u,v \in \cC^1$ with a short edge $uv$, i.e., $x_{uv} < \rho$. Note that in this case, $T_u$ and $T_v$ intersect so using either of the bounds in \Cref{corr:lem:degen-sparse} or \Cref{corr:lem:degen-dense} counts this edge twice, once from each of its endpoints. Therefore, we use a coefficient of $.5$ for \cref{corr:eq:degen-sparse}(of \Cref{corr:lem:degen-sparse}) and \cref{corr:eq:degen-dense}(of \Cref{corr:lem:degen-dense}). Putting all these together, each edge between degenerate clusters is counted exactly once on the left hand side and we only have edges incident to degenerate clusters on the right hand side, i.e., edge $uu'$ for $u, u' \in \cC^1$ or $uv$ for $u \in \cC^1, v \in V\backslash\cC^1$. So let us summarize how we bound the cost of an edge $uu'$ for $u, u \in \cC^1$, based on the length of the edge and condition of endpoints. All these terms can be bounded by the $\max$ term in the lemma:
{\small
\begin{equation*}
    \begin{cases}
    \text{       } x_{uu'} > \rho: \frac{1}{\rho} + \frac{1}{\epsilon}\frac{1-\rho}{\rho} &  \text{\cref{corr:fact:long-pos} \& Eq. \ref{corr:eq:degen-dense} for $u,u'$\footnotemark,}\\
    \text{$T_u \& T_{u'}$ sparse}:\frac{1}{\sigma}& \text{Eq. \ref{corr:eq:degen-sparse} for $u,u'$,} \\ 
    \text{$T_u \oplus T_{u'}$\footnotemark sparse}:\frac{1}{2\sigma} + \frac{1}{2\epsilon}\frac{1}{\alpha^*} & \text{Eq. \ref{corr:eq:degen-sparse} \& \ref{corr:eq:degen-dense} for $u,u'$,} \\ 
    \text{$T_u \& T_{u'}$ dense}:\frac{1}{\epsilon}\frac{1}{\alpha^*} & \text{Eq. \ref{corr:eq:degen-dense} for $u,u'$,} 
    \end{cases}
\end{equation*} 
}
\addtocounter{footnote}{-1}
\footnotetext{This is in the worst case that both $T_u$ and $T_{u'}$ are dense.}
\stepcounter{footnote}
\footnotetext{This symbol denotes exclusive or, meaning that exactly one of $T_u$ and $T_{u'}$ is sparse.}
\noindent

 For an edge $uv$ for $u \in \cC^1 ,v \in V\backslash\cC^1$, it can only get charged if $T_u$ is dense and since we only consider $.5$ of Equation \ref{corr:eq:degen-dense}, it gets charged at most $\frac{1}{2\epsilon}\cdot\frac{1-\rho}{\rho}$. Combining results for the discussed two cases, we get the statement of the lemma. 
\end{proof}

\subsection{Proof of \texorpdfstring{\Cref{corr:thm:main}}{Theorem}}
\full{Now we have all the tools necessary to prove the main theorem of this chapter.}
\begin{proof}[proof of \Cref{corr:thm:main}]
For the choice of $\rho = .5$ and $\sigma = .25$, we get that an edge incident to an endpoint in a non-degenerate cluster is charged at most $4$ by \cref{corr:thm:non-degenerate} and $\frac{1}{2\epsilon}$ by \cref{corr:thm:degenerate}, so the total of $4 + \frac{1}{2\epsilon}$. For an edge between degenerate clusters, it is charged by at most 
{\small
$$
\max\{2+ \frac{1}{\epsilon}, 4, 2 + \frac{1}{2\epsilon\alpha^*}, \frac{1}{\epsilon\alpha^*}\} \leq \max\{4 + \frac{1}{\epsilon}, \frac{1}{\epsilon\alpha^*}\}.
$$}
So an edge is charged at most $\max\{\frac{1}{\epsilon\alpha^*}, 4 + \frac{1}{\epsilon}\}$.
\end{proof}

\section{Experiments}\label{corr:sec:experiments}
In this section, we present the results of our experiments, designed to measure the quality of our algorithm. Our key findings are: \textbf{(1)} Our clustering cost is considerably better than our proven approximation ratio, namely, at most $15\%$ more than the optimal cost even for $\epsilon = 0.01$.
\textbf{(2)} The maximum fairness violation of our algorithm on non-singleton clusters is often much less than $\epsilon$. On some datasets, with varying $\epsilon$, violation is fixed on a small constant as early as $\epsilon = 0.3$.\full{ Our codes are publicly available on GitHub.\footnote{\url{github.com/moonin12/improved_fair_correlation_clustering}}} We start the section by describing our datasets, followed by quality measures and benchmarks.

\textbf{Datasets.} We use the datasets from~~\cite{ahmadian2020fair}: \amazon~\cite{leskovec2007dynamics} dataset publicly available on SNAP\footnote{\url{snap.stanford.edu/data/}} that corresponds to 2,441,053 items on Amazon with $+$ edges between co-reviewed items and $-$ edges for the rest. We also use datasets publicly available on the UCI repository\footnote{\url{archive.ics.uci.edu/ml/datasets/}}, \reuters~\cite{reuters_cite} and \victorian~\cite{gungor2018fifty}\footnote{The datasets are available at \url{ archive.ics.uci.edu/ml/datasets/Reuter_50_50} and \url{archive.ics.uci.edu/ml/datasets/Victorian+Era+Authorship+Attribution}} text data corresponding to up to 16 authors, 50 to 100 texts for each. The text is embedded into a 10 dimensional space using Gensim’s Doc2Vec~\cite{Gensim} then set the top $\theta = \{0.25, 0.5, 0.75\}$ fraction of edges in terms of cosine similarity as positive. See~\Cref{corr:subsec:dataset} for additional datasets with overlapping colors.

\textbf{Quality Measures.} We report \emph{cost ratio} which is the ratio of clustering cost to $|E|$. For the LP, this is the LP objective divided by $|E|$. For fairness analysis, we measure \emph{maximum fairness violation} defined as $\max_{C \in \cC, i \in [\ell]} |V_i \cap C|/(\alpha_i|C|) - 1 $ of non-degenerate clusters (note, this is bounded by $\epsilon$).

\textbf{Benchmarks.} We compare with the fair clustering algorithm of~\cite{ahmadian2020fair} as well as results they report on two fairness-oblivious \correlation algorithms Loc, their internal local search algorithm for \correlation, as well as Piv~\cite{acn}. Since their reported cost ratios are relative to problem size, we use the numbers without implementing their algorithm. As for~\cite{ahmadi2020fair} published on arXiv, their code is not available and their plots are not re-usable in this fashion. 

To compare with~\cite{ahmadian2020fair}, we set $\alpha$'s uniformly to $1/\ell$. To experiment on datasets with overlapping colors, used in~\cite{ahmadi2020fair,BCFN19, CKLV17}, we set $\alpha$'s uniformly to $0.8$ in accordance with the disparate impact doctrine~\cite{EEOC}.
To tune the parameters $\rho$ and $\sigma$, for each dataset, we try 5 different values for $\rho$ from $0.1$ to $0.5$, and 10 different values for $\sigma$ from $0.1\times \rho/2$ to $\rho/2$. Also, we shuffle the points 20 times, re-run the algorithm, and report the result with the best clustering cost. We solve the \lp using CPLEX~\cite{CPLEX21}.


\subsection{Cost Analysis}
We compare our clustering cost to the \lp objective which is a \emph{lower bound} on the cost of the optimal solution. Our results demonstrate that we are much closer to the optimal solution than our theoretical results suggest. We experiment once with varying $\epsilon$ and once with varying $\alpha^{\min} := \min_{i \in [\ell]} \alpha_i$, scaling all others $\alpha$s accordingly (see \Cref{corr:fig:varepsminalpha}). \Cref{corr:fig:varepsminalpha} suggests that allowing the algorithm to violate fairness, our cost can even beat the optimal cost (which is constrained by fairness). As stated in \Cref{corr:thm:main}, the approximation ratio of our algorithm drops significantly by increasing $\epsilon$. \Cref{corr:fig:varepsminalpha} depicts how the LP costs drop by relaxing fairness through increasing $\alpha^{\min}$ and that our cost almost matches the optimal cost even for $\epsilon = 0.01$. See \Cref{corr:subsec:appendcost} for the full set of experiments.

\begin{figure}[ht]
\begin{center}
\centering
\includegraphics[width=0.4\columnwidth]{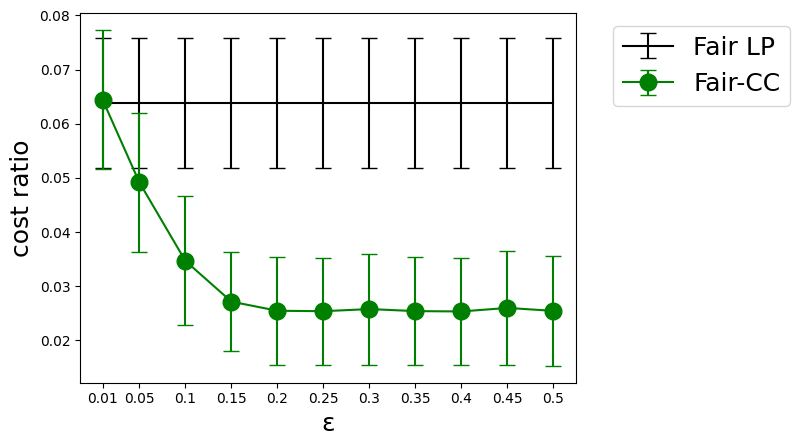}
\includegraphics[width=0.4\columnwidth]{plots/var eps/amazon_theta-1_colors0-1_samp9_alphas1_scale0_lp_var_eps.png}
\caption{Cost ratio of our algorithm (Fair-CC) and the fair \lp for \amazon, 10 sub-samples of 200 each. The first plot uses varying $\epsilon$ from $0.01$, to $0.5$. The second plot is for $\epsilon = 0.01$, varying $\alpha^{\min}$ from its original value $0.5$ to $1$ (no fairness), scaling other $\alpha$'s accordingly.}
\label{corr:fig:varepsminalpha}
\end{center}
\end{figure}



Furthermore, we compare our cost ratio with that of ~\cite{ahmadian2020fair}. The \lp solutions allows us to compare with optimum, but in contrast, ~\cite{ahmadian2020fair} do not have any proxy for the optimal cost and just compare their cost ratio against popular fairness-oblivious correlation clusterings, i.e., Piv and Loc. For the sake of completeness, we also compare our results to that of \cite{ahmadian2020fair} and algorithms therein. \Cref{corr:tab:prevcomp} demonstrates that even for $\epsilon$ as small as $0.01$, our cost ratios are considerably better.


\begin{table}[t]
\caption{Cost ratio comparison for datasets in~\cite{ahmadian2020fair} (AKEM) for the case of two colors, $\alpha_1 = \alpha_2 = .5$, $\epsilon = 0.01$. \amazon is the average and standard deviation reported on 20 sub-samples of size 200. Loc and Piv are \correlation algorithms demonstrated to be unfair on these datasets~\cite{ahmadian2020fair}}
\label{corr:tab:prevcomp}
\begin{center}
\begin{scriptsize}
\begin{sc}
\begin{tabular}{lcccr}
\toprule
        &  \multicolumn{2}{c}{\makecell{Fair Algorithms}}  &  \multicolumn{2}{c}{\makecell{Unfair Algorithms}}    \\
        \\
        Dataset&  \textbf{Fair-CC} & AEKM & Loc & Piv  \\
\midrule
\amazon    & $0.064 \pm 0.013$ & 0.064& 0.010 & 0.011\\
\reuters $\theta = 0.25$  &{\bf0.213}& 0.230 & 0.096& 0.161\\
\reuters $\theta = 0.50$ &{\bf0.297}& 0.350 & 0.181& 0.231\\
\reuters $\theta = 0.75$ &{\bf0.196}& 0.199 & 0.188& 0.241\\
\victorian $\theta = 0.25$&0.217& {\bf0.212} & 0.109& 0.158\\
\victorian $\theta = 0.50$&{\bf0.325}& 0.348 & 0.183& 0.268\\
\victorian $\theta = 0.75$&{\bf0.232}& 0.237 & 0.203& 0.280\\
\bottomrule
\end{tabular}
\end{sc}
\end{scriptsize}
\end{center}
\end{table}
\subsection{Fairness Analysis}
In this section, we report result of study on how relaxing the allowed violation in fairness constraints affects the actual final violation of fairness in our output clusters. \Cref{corr:fig:fairvarepsgood} shows a case where maximum fairness violation of our algorithm is much less than $\epsilon$ (max allowed violation) and bounded by $0.11$. On some datasets like \amazon in \Cref{corr:fig:fairvarepsgood}, the maximum violation curve demonstrates a few \emph{elbows} for varying $\epsilon$. This itself is of independent interest and may give more insights on dataset upon closer inspection. See \Cref{corr:subsec:appendfair} for a full set of experiments.
\begin{figure}[ht]
\begin{center}
\centering
\includegraphics[width=0.4\columnwidth]{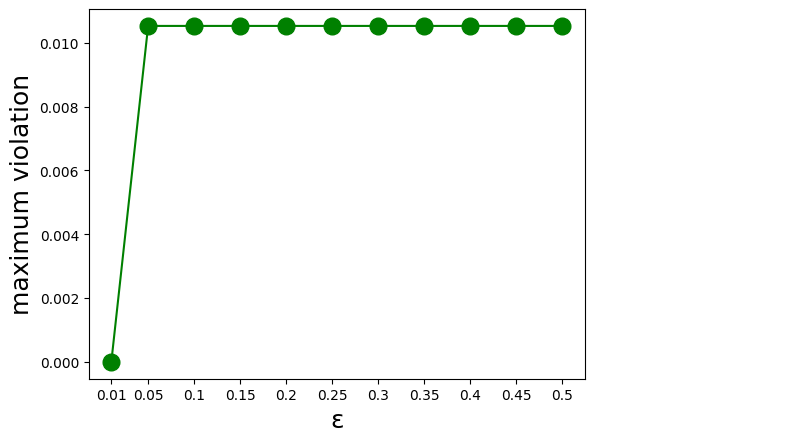}
\includegraphics[width=0.4\columnwidth]{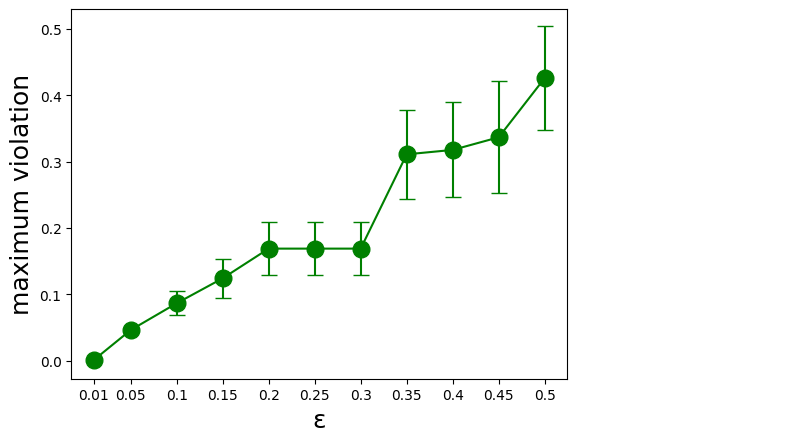}
\caption{fairness violation of our algorithm (Fair-CC), varying $\epsilon$ from $0.01$ to $0.5$ on \reuters $\theta=0.75$ and \amazon.}
\label{corr:fig:fairvarepsgood}
\end{center}
\end{figure}

\clearpage
\full{\section{Acknowledgements}
A preliminary version of this work appeared in Maryam Negahbani's Ph.D. thesis. We would like to thank Deeparnab Chakrabarty for valuable feedback that greatly improved the representation of this paper. Maryam Negahbani was funded on NSF Award \#2041920.}
\bibliographystyle{plain}
\bibliography{main}{}
\conf{\input{neurips_checklist}}
\newpage
\appendix
\conf{\section{Missing Proofs from \Cref{corr:sec:analysis}}\label{corr:subsec:missing}
In this section, we restate Claims and Lemmas from \Cref{corr:sec:analysis} for which the proof is deferred to this section.
\nonlongpos*

\nonshortpos*

\nonshortneg*

\nonlongneg*

\degensparse*

The following two claims are defined as part of the proof of \cref{corr:lem:degen-dense}, so the reader should refer to the notation defined therein.
\newclaiminlemma*

\claiminlemma*
}
\section{Complimentary Experiments}
\subsection{Additional Comparison with Previous Work}
In this section, we bring our comparison results with previous work for 4 and 8 colors in \Cref{corr:tab:prevcompfour,corr:tab:prevcompeight} respectively.
\begin{table}[th!]
\caption{Cost ratio comparison for datasets in~\cite{ahmadian2020fair} (AKEM) for the case of four colors and all $\alpha$'s equal to $.25$. We use $\epsilon = 0.01$. \amazon and \victorian are sub-sampled to 200 stratified on color combinations. We report the average and standard deviation over 20 sub-samples. Loc is a \correlation algorithm demonstrated to be unfair on these datasets~\cite{ahmadian2020fair}.}\label{corr:tab:prevcompfour}
\begin{center}
\begin{scriptsize}
\begin{sc}
\begin{tabular}{lcccr}

\toprule
        &  \multicolumn{2}{c}{\makecell{Fair Algorithms}}  &  \multicolumn{1}{c}{\makecell{Unfair Algorithm}}    \\
        \\
        Dataset&  \textbf{Fair-CC} & AEKM & Loc \\
\midrule
\reuters $\theta = 0.25$  &0.250& {\bf0.244} & 0.120 \\
\reuters $\theta = 0.50$ &{\bf0.306}& 0.336 & 0.191 \\
\reuters $\theta = 0.75$ &{\bf0.218}& 0.227 & 0.211 \\
\victorian $\theta = 0.25$&$0.240 \pm 0.012$& {\bf0.210} & 0.141 \\
\victorian $\theta = 0.50$&$0.322 \pm 0.014$& {\bf0.311} & 0.228 \\
\victorian $\theta = 0.75$&{$\bf0.240 \pm 0.007$}& 0.245 & 0.225 \\
\bottomrule
\end{tabular}
\end{sc}
\end{scriptsize}
\end{center}
\end{table}

\begin{table}[th!]
\caption{Cost ratio comparison for datasets in~\cite{ahmadian2020fair} (AKEM) for the case of eight colors and all $\alpha$'s equal to $.125$. We use $\epsilon = 0.01$. Datasets are sub-sampled to 200 stratified on color combinations. We report the average and standard deviation over 20 sub-samples for each. Loc is a \correlation algorithm demonstrated to be unfair on these datasets~\cite{ahmadian2020fair}.}
\label{corr:tab:prevcompeight}
\begin{center}
\begin{scriptsize}
\begin{sc}
\begin{tabular}{lcccr}
\toprule
        &  \multicolumn{2}{c}{\makecell{Fair Algorithms}}  &  \multicolumn{1}{c}{\makecell{Unfair Algorithms}}    \\
        \\
        Dataset&  \textbf{Fair-CC} & AEKM & Loc\\
\midrule
\reuters $\theta = 0.25$  &${\bf0.250 \pm 0.000}$& 0.252 & 0.133\\
\reuters $\theta = 0.50$ &${0.5 \pm 0.000}$& {\bf0.426} & 0.239\\
\reuters $\theta = 0.75$ &${\bf0.244 \pm 0.003}$& 0.250 & 0.237\\
\victorian $\theta = 0.25$&${0.250 \pm 0.000}$& {\bf0.212} & 0.161\\
\victorian $\theta = 0.50$&${0.370 \pm 0.085}$& {\bf0.319} & 0.249\\
\victorian $\theta = 0.75$&${\bf0.242 \pm 0.006}$& 0.246 & 0.218\\
\bottomrule
\end{tabular}
\end{sc}
\end{scriptsize}
\end{center}
\end{table}
\subsection{Experiments with Overlapping Colors }\label{corr:subsec:dataset}
We use datasets publicly available on the UCI repository\footnote{\url{archive.ics.uci.edu/ml/datasets/}}\textbf{(1)} \bank~\cite{bank-paper} with 4,521 points, corresponding to phone calls from a marketing campaign by a
Portuguese banking institution.
\textbf{(2)} \census~\cite{census-paper} with 32,561 points, representing information about individuals extracted from the 1994 US census.
\textbf{(3)} \diabetes~\cite{strack2014impact} 
with 101,766 points, extracted from diabetes patient records.

\bank, \census, and \diabetes are used in ~\cite{ahmadian2019clustering,BCFN19, CKLV17} and have 5, 7, and 8 colors respectively where each node has exactly two colors \Cref{corr:table:data}. These are sub-sampled to 200 stratified on color combinations. Since the previous work does not handle overlapping colors, we only compare with the LP cost in \cref{corr:tab:compovercolor}

\begin{table}[!ht]
\centering
\caption{Detailed description of the datasets \bank, \census, and \diabetes. For each dataset, the coordinates are the numeric attributes used to determined the position of each record in the Euclidean space. The sensitive attributes determines protected groups.}\label{corr:table:data}
\begin{tabular}{ llrl }
\rule{0pt}{4ex}
\textbf{Dataset}& \textbf{Coordinates} & \textbf{\shortstack{Sensitive\\  attributes}}&\textbf{Protected groups} \\ 
 \hline 
 \rule{0pt}{2.5ex}
 \multirow{2}*\textbf{\bank}
 & age, balance, duration & marital & married, single, divorced \\
 \cmidrule(r){3-4}
 &  & default &  yes, no \\
 \hline 
 \rule{0pt}{2.5ex}
 \multirow{3}*\textbf{\census}
 & age, education-num,  & sex & female, male\\
 \cmidrule(r){3-4}
  &final-weight, capital-gain, & race & Amer-ind, asian-pac-isl,\\
  \rule{0pt}{2.5ex}
   & hours-per-week && black, other, white\\
   
    \hline 
 \rule{0pt}{2.5ex}
 \multirow{2}*\textbf{\diabetes}
 & gender, age, race, & gender & female, male \\
 \cmidrule(r){3-4}
 & time-in-hospital & race & 6 groups \\

   \hline
\end{tabular}

\end{table}

\begin{table}[th!]
\caption{Cost ratio comparison with the LP cost for datasets with overlapping colors, used in~\cite{ahmadian2019clustering, BCFN19, CKLV17}. all $\alpha$'s are set to $0.8$ in accordance with the DI doctrine~\cite{EEOC}. We use $\epsilon = 0.01$. Datasets are sub-sampled to 200 stratified on color combinations.}\label{corr:tab:compovercolor}
\begin{center}
\begin{scriptsize}
\begin{sc}
\begin{tabular}{lcc}
\toprule
        \\
        Dataset&  Fair-CC cost & Fair LP cost \\
\midrule

\bank $\theta = 0.25$  & 0.249& 0.248\\
\bank $\theta = 0.50$  & 0.498& 0.497\\
\bank $\theta = 0.75$  & 0.749& 0.746\\
\midrule
\census $\theta = 0.25$  & 0.135& 0.108\\
\census $\theta = 0.50$  & 0.226& 0.190\\
\census $\theta = 0.75$  & 0.268& 0.276\\
\midrule
\diabetes $\theta = 0.25$  & 0.100& 0.077\\
\diabetes $\theta = 0.50$  & 0.143& 0.122\\
\diabetes $\theta = 0.75$  & 0.130& 0.119\\
\bottomrule
\end{tabular}
\end{sc}
\end{scriptsize}
\end{center}
\end{table}
\clearpage
\subsection{Cost Analysis}\label{corr:subsec:appendcost}
In this section, we have the set of complete cost analysis experiments over all the datasets except \amazon, which can be found in \Cref{corr:sec:experiments}. For each dataset we have a pair of figures depicting the cost ratio of our algorithm (Fair-CC) and the fair \lp for that dataset. On the left, varying $\epsilon$ from $0.01$ (with completely fair clusters), to $0.5$. On the right, for $\epsilon = 0.01$ and varying $\alpha^{\min}$ from its original value $0.5$ to 1 (no fairness), scaling other $\alpha$'s accordingly.

\begin{figure}[!ht]
  \centering
    \includegraphics[width=0.45\columnwidth]{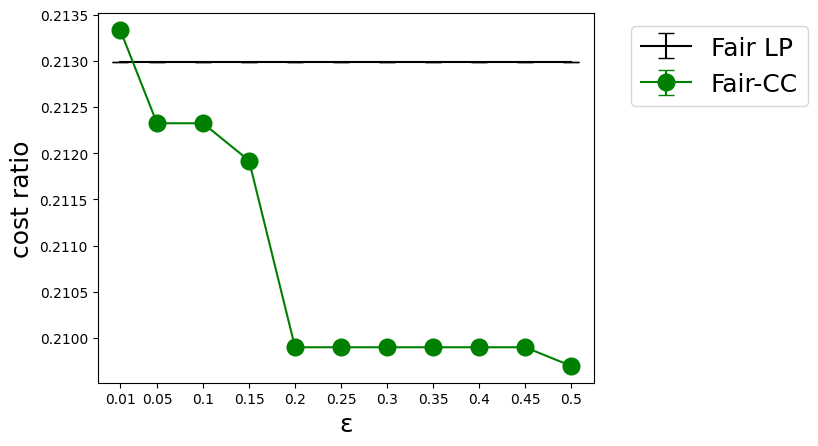}
    \includegraphics[width=0.45\columnwidth]{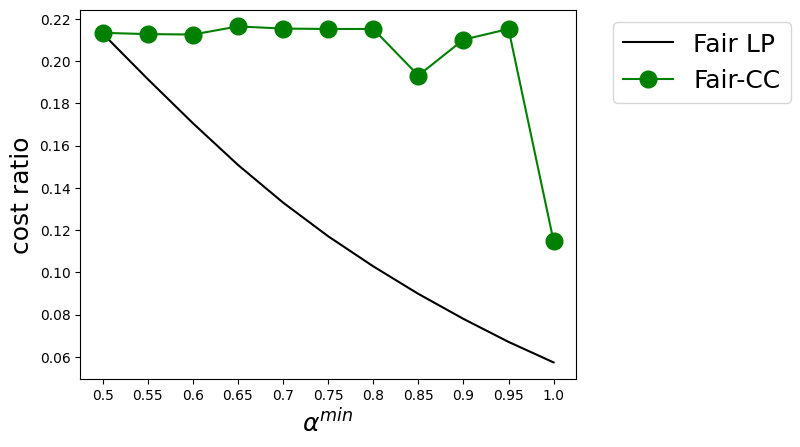}
  \caption{\reuters $\theta = 0.25$, cost ratios of our algorithm (fair-CC) and fair LP for varying $\epsilon$. }
\end{figure}

\begin{figure}[!ht]
  \centering
    \includegraphics[width=0.45\columnwidth]{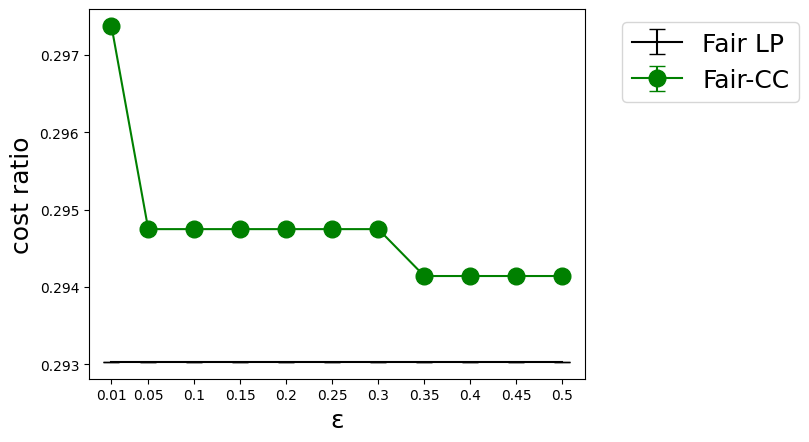}
    \includegraphics[width=0.45\columnwidth]{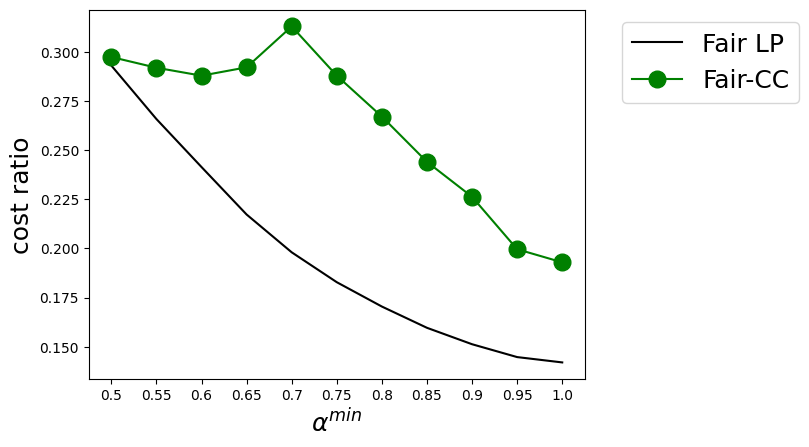}
  \caption{\reuters $\theta = 0.5$, cost ratios of our algorithm (fair-CC) and fair LP for varying $\epsilon$. }
\end{figure}

\begin{figure}[!ht]
  \centering
    \includegraphics[width=0.45\columnwidth]{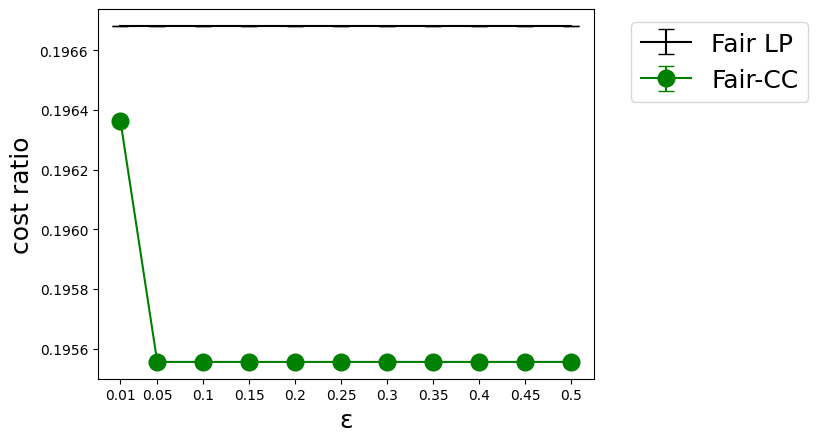}
    \includegraphics[width=0.45\columnwidth]{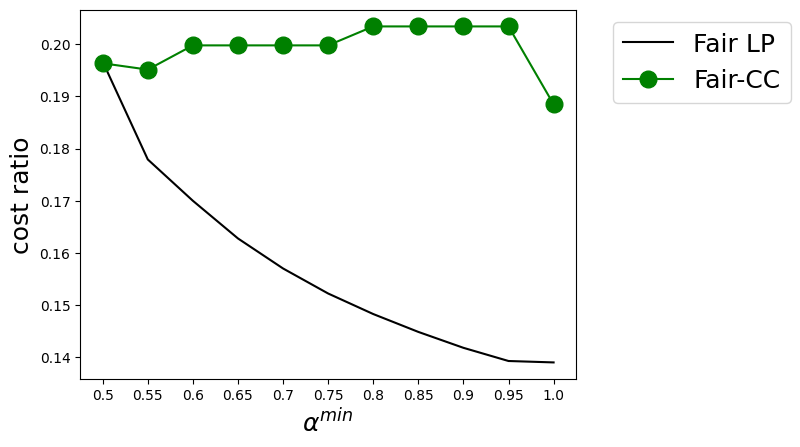}
  \caption{\reuters $\theta = 0.75$, cost ratios of our algorithm (fair-CC) and fair LP for varying $\epsilon$. }
\end{figure}

\begin{figure}[!ht]
  \centering
    \includegraphics[width=0.45\columnwidth]{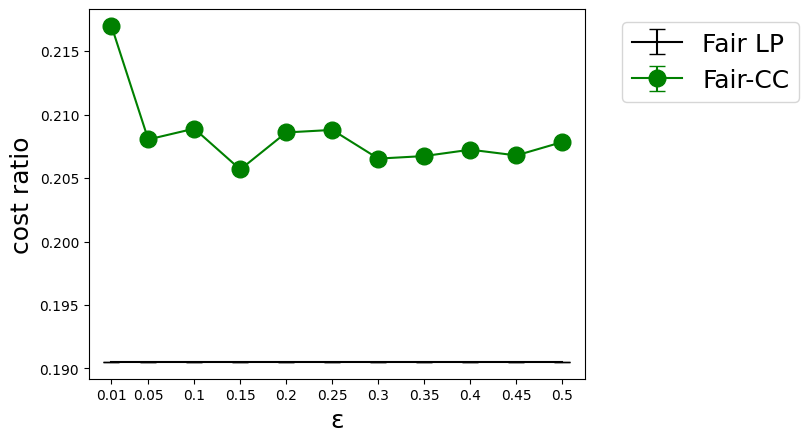}
    \includegraphics[width=0.45\columnwidth]{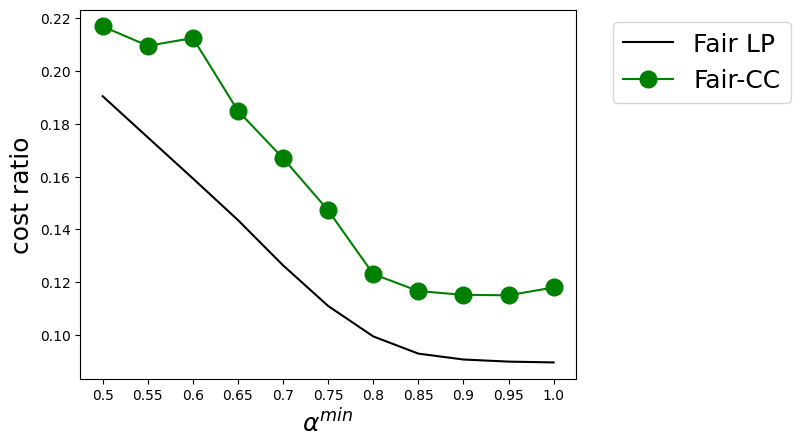}
  \caption{\victorian $\theta = 0.25$, cost ratios of our algorithm (fair-CC) and fair LP for varying $\epsilon$. }
\end{figure}

\begin{figure}[!ht]
  \centering
    \includegraphics[width=0.45\columnwidth]{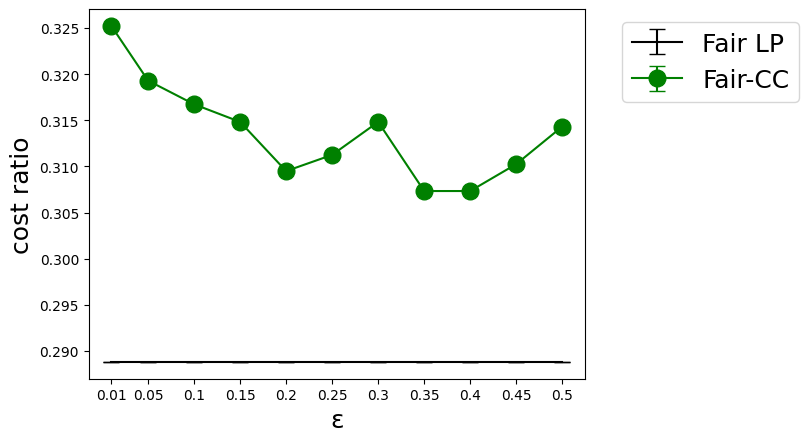}
    \includegraphics[width=0.45\columnwidth]{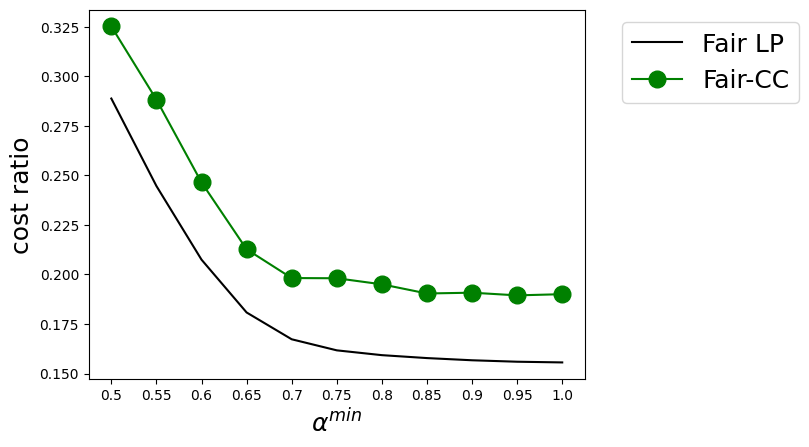}
  \caption{\victorian $\theta = 0.5$, cost ratios of our algorithm (fair-CC) and fair LP for varying $\epsilon$. }
\end{figure}

\begin{figure}[!ht]
  \centering
    \includegraphics[width=0.45\columnwidth]{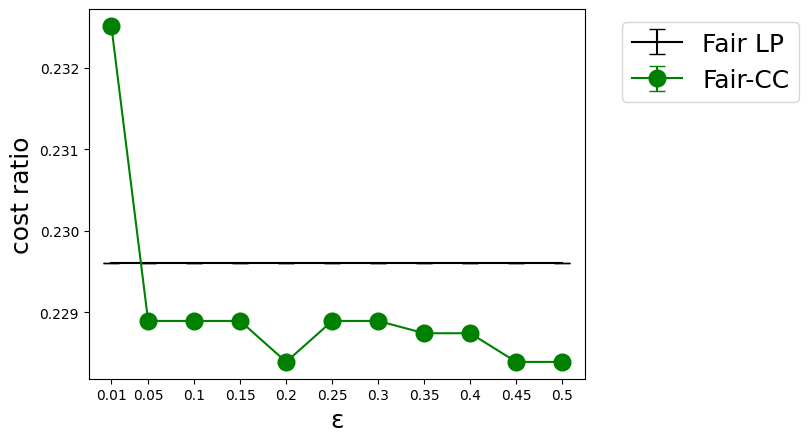}
    \includegraphics[width=0.45\columnwidth]{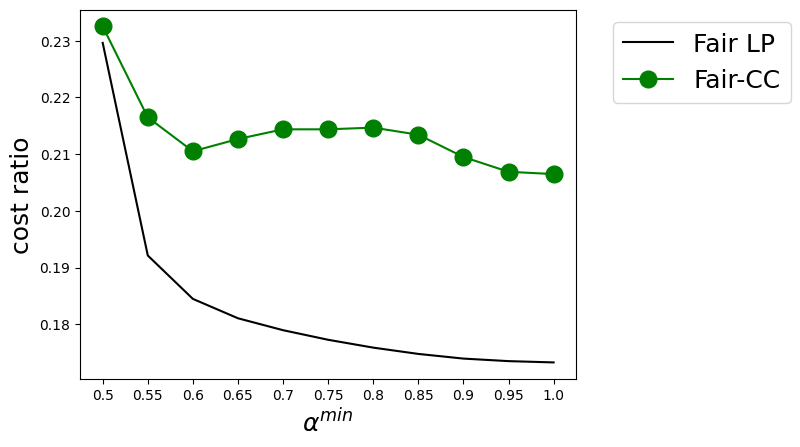}
  \caption{\victorian $\theta = 0.75$, cost ratios of our algorithm (fair-CC) and fair LP for varying $\epsilon$. }
\end{figure}

\clearpage
\subsection{Fairness Analysis}\label{corr:subsec:appendfair}
In this section we compare the
fairness violation of our algorithm (Fair-CC) with the allowed fairness violation $\epsilon$, varying $\epsilon$ from $0.01$ (with completely fair clusters) to $0.3$. We have included all the datasets except \amazon which is brought in \Cref{corr:sec:experiments}. \reuters and \victorian, each dataset has three plots associated with $\theta = 0.25, 0.5,$ and $0.75$ on the top-left, top-right, and bottom respectively. \bank, \census, and \diabetes have trivial max violations for this range of $\epsilon$ as $\alpha^{\min}$ is too low for an increase in $\epsilon$ to be able to make a change in cluster structure. There are some colors that are extremely rare in these datasets e.g. ``Amer-ind'' for race in \census.

\begin{figure}[!ht]
  \centering
    \includegraphics[width=0.45\columnwidth]{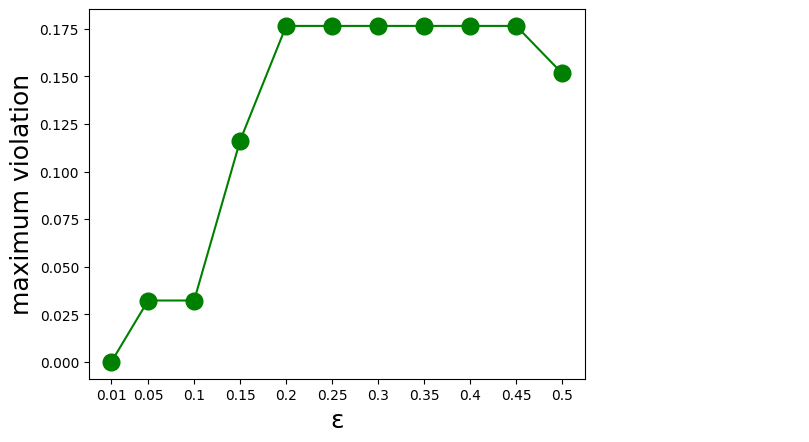}
    \includegraphics[width=0.45\columnwidth]{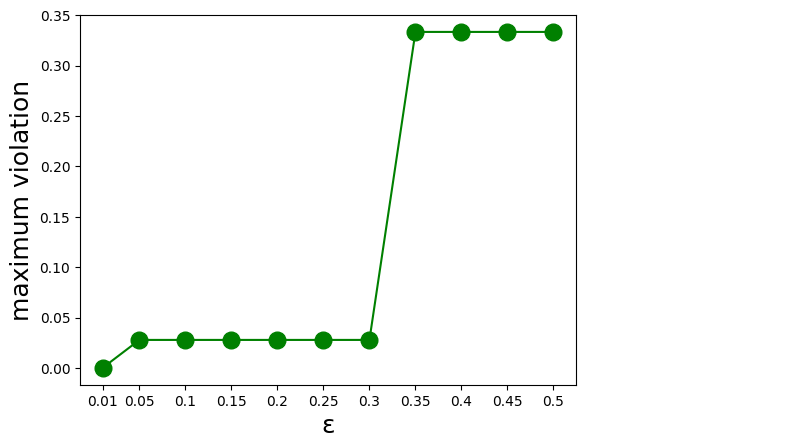}
    \includegraphics[width=0.45\columnwidth]{plots/violations/reuters_theta0.75_colors0-1_alphas1_scale0_viol_var_eps.png}
  \caption{\reuters, maximum violation of our algorithm for varying $\epsilon$. }
\end{figure}

\begin{figure}[!ht]
  \centering
    \includegraphics[width=0.45\columnwidth]{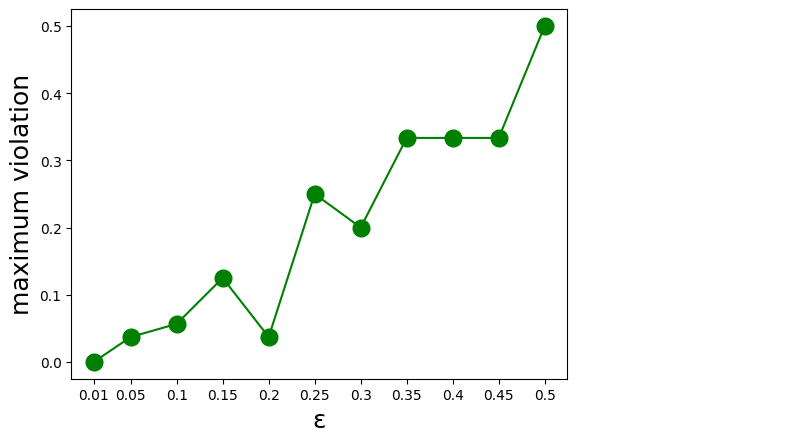}
    \includegraphics[width=0.45\columnwidth]{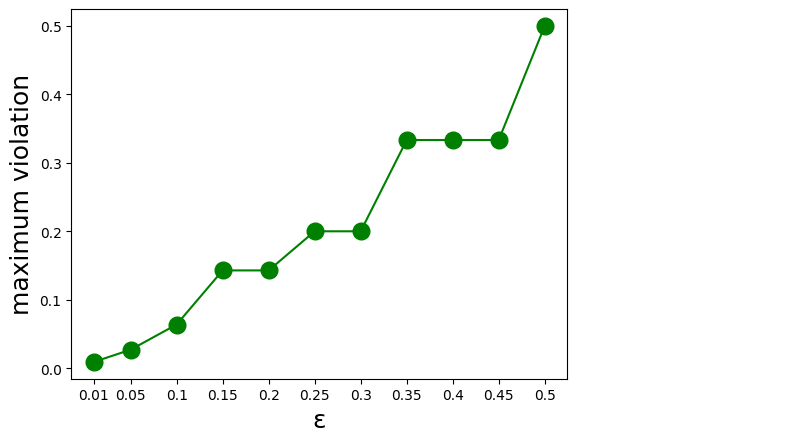}
    \includegraphics[width=0.45\columnwidth]{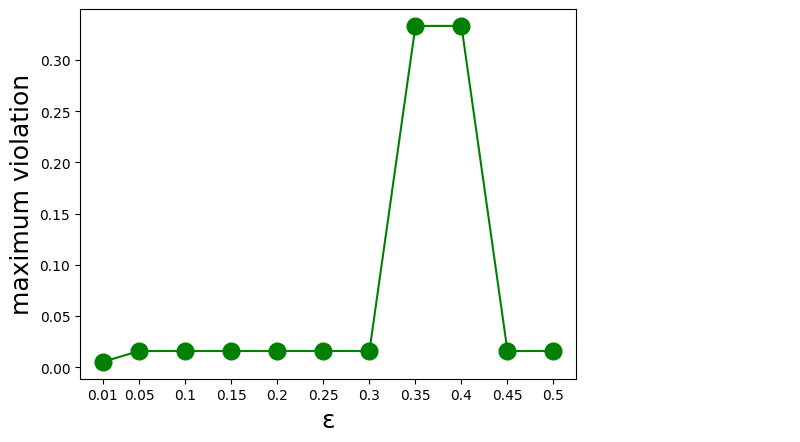}
  \caption{\victorian, maximum violation of our algorithm for varying $\epsilon$.}
\end{figure}

\end{document}